\documentclass[a4paper,10pt]{article}
\usepackage[english]{groreport}
\usepackage{graphicx}
\usepackage{amssymb}
\usepackage{amsmath}
\usepackage{mathrsfs}
\usepackage{mathabx} 
\usepackage{shuffle}
\usepackage{algorithm}
\usepackage{algorithmic}
\usepackage{booktabs} 
\usepackage{wrapfig}
\usepackage{listings} 
\usepackage{xcolor}
\usepackage{color}
\usepackage{comment}

\usepackage{fancyhdr}
\usepackage[latin1]{inputenc} 
\usepackage{placeins} 
\usepackage{float}
\usepackage{floatflt}
\usepackage{subfloat}
\usepackage{theorem}
\usepackage{relsize}

\usepackage{caption}
\usepackage{subcaption}
\usepackage{epstopdf}
\usepackage{QFHeader}

\usepackage{bbm}

\usepackage{pgfplots}
\usepackage{tikz}
\usepackage{tikzscale}
\usetikzlibrary{shapes.multipart}
\usetikzlibrary{shapes.geometric, arrows, calc, automata, petri, shapes, decorations.pathmorphing, decorations.pathreplacing, positioning, fit, backgrounds}

\newtheorem{definition}{\textbf{Definition}}[section]
 
\newtheorem{theorem}{\textbf{Theorem}}[section]

\newtheorem{proposition}{\textsl{\textbf{Proposition}}}
 
\newtheorem{lemma}{\textbf{Lemma}}[section]
 
\newtheorem{remark}{\textsl{\textbf{Remark}}}[section]

\newtheorem{proof}{\textbf{proof}}[section]

\newcommand{\be}{\begin{equation}}
\newcommand{\en}{\end{equation}}
\newcommand{\beq}{\begin{equation}}
\newcommand{\eeq}{\end{equation}}
\newcommand{\bes}{\begin{eqnarray*}}
\newcommand{\ens}{\end{eqnarray*}}
\newcommand{\beqa}{\begin{eqnarray*}}
\newcommand{\eeqa}{\end{eqnarray*}}
\newcommand{\bea}{\begin{eqnarray}}
\newcommand{\ena}{\end{eqnarray}}

\newcommand{\xuparrow}[1]{%
  {\left\uparrow\vbox to #1{}\right.\kern-\nulldelimiterspace}
}

\newcommand{\xdownarrow}[1]{%
  {\left\downarrow\vbox to #1{}\right.\kern-\nulldelimiterspace}
}

\DeclareMathAlphabet{\mathsfit}{\encodingdefault}{\sfdefault}{m}{sl}
\SetMathAlphabet{\mathsfit}{bold}{\encodingdefault}{\sfdefault}{bx}{n}

\title{\textbf{Generative Path-Law Jump-Diffusion: Sequential MMD-Gradient Flows and Generalisation Bounds in Marcus-Signature RKHS} 
}

\author{Daniel \textsc{Bloch}  
\footnote{Visting Professor at the College of Engineering and Computer Science, VinUniversity, Hanoi.} \\
\textsc{University of Paris 6 \& VinUniversity} \\
Working Paper \\
\footnote{All mistakes are ours.} 
}

\date{27th of February 2026 \\
Version : 1.0.0}

\begin{document}
\QFHeader{\textbf{Generative Path-Law Jump-Diffusion: Sequential MMD-Gradient Flows and Generalisation Bounds in Marcus-Signature RKHS}}{Daniel Bloch}{27th of February 2026}
\maketitle

\begin{abstract}
This paper introduces a novel generative framework for synthesising forward-looking, c\`adl\`ag stochastic trajectories that are sequentially consistent with time-evolving path-law proxies, thereby incorporating anticipated structural breaks, regime shifts, and non-autonomous dynamics. By framing path synthesis as a sequential matching problem on restricted Skorokhod manifolds, we develop the \textit{Anticipatory Neural Jump-Diffusion} (ANJD) flow, a generative mechanism that effectively inverts the time-extended Marcus-sense signature. Central to this approach is the Anticipatory Variance-Normalised Signature Geometry (AVNSG), a time-evolving precision operator that performs dynamic spectral whitening on the signature manifold to ensure contractivity during volatile regime shifts and discrete aleatoric shocks. We provide a rigorous theoretical analysis demonstrating that the joint generative flow constitutes an infinitesimal steepest descent direction for the Maximum Mean Discrepancy functional relative to a moving target proxy. Furthermore, we establish statistical generalisation bounds within the restricted path-space and analyse the Rademacher complexity of the whitened signature functionals to characterise the expressive power of the model under heavy-tailed innovations. The framework is implemented via a scalable numerical scheme involving Nystr\"om-compressed score-matching and an anticipatory hybrid Euler-Maruyama-Marcus integration scheme. Our results demonstrate that the proposed method captures the non-commutative moments and high-order stochastic texture of complex, discontinuous path-laws with high computational efficiency.
\end{abstract}

\medskip
\noindent\textbf{Keywords:} Anticipatory Neural Jump-Diffusion (ANJD), Marcus-Sense Signature, Skorokhod Space, MMD-Gradient Flow, Adaptive Variance-Normalised Signature Geometry (AVNSG), Schr\"odinger Bridge, Euler-Maruyama-Marcus (EMM) Integration, Nystr\"om Approximation, Stochastic Synthesis, Spectral Whitening.

\section{Introduction}
\label{sec_introduction}

\subsection{High-level goal}

The primary objective of this work is to establish a rigorous generative framework for the synthesis of forward-looking, c\`adl\`ag stochastic trajectories; by enforcing sequential consistency with time-evolving path-law proxies, the model natively incorporates expected structural breaks, regime shifts, and evolving volatility patterns into the generative process. While previous advancements in signature-based filtering have enabled the recursive estimation of expected path-dynamics, the inversion of these abstract, infinite-dimensional moments into concrete, synthetic realisations on the restricted Skorokhod manifolds $\mathcal{D}_s$, particularly those containing discrete discontinuities, remains a formidable challenge.

\medskip
\noindent
This paper seeks to bridge this gap by treating the generative task as a sequential anticipatory transport problem within the Skorokhod space, equipped with a time-varying signature-based metric. Our goal is to develop the \textit{Anticipatory Neural Jump-Diffusion} (ANJD) flow, a mechanism that inverts the time-extended Marcus-sense signature Bochner integral (Marcus ~\cite{Marcus81}, Yosida ~\cite{Yosida95}) to produce an ensemble of paths whose collective law $\mu_s$ is infinitesimally coerced toward a moving target proxy $\hat{\Phi}_{s|t}$ for $s \in [t, t+\tau]$. 

\medskip
\noindent
Crucially, we justify the representational sufficiency of the signature in this discontinuous setting by appealing to recent universal approximation results for c\`adl\`ag paths (Cuchiero et al. ~\cite{CuchieroEtAl25}), which demonstrate that linear functionals of the time-extended signature can uniformly approximate any continuous functional on the Skorokhod manifold. Furthermore, following Friz et al. ~\cite{FrizEtAl17,FrizEtAl18}, we treat these jump-diffusions as L\'evy rough paths, ensuring that the Marcus-sense signature remains a group-valued descriptor that uniquely characterises the path-law. By leveraging a time-varying, precision-weighted geometry (AVNSG), we ensure that the resulting synthesis maintains high-fidelity stochastic texture, capturing non-linear dependencies and non-commutative higher-order moments even in the presence of significant non-stationarity and forecasted aleatoric shocks.

\subsection{Motivation and literature positioning}

The generative modelling of high-frequency, non-stationary stochastic processes remains a critical frontier in quantitative finance and physical sciences (Caulfield et al. ~\cite{CaulfieldEtAl24}). Traditional architectures, such as TimeGAN (Yoon et al. ~\cite{YoonEtAl19}), FinGAN (Vuleti\'c et al. ~\cite{VuleticEtAl24}), or Variational Autoencoders (VAEs) (Buehler et al. ~\cite{BuehlerEtAl20}), often struggle to maintain the path-geometric integrity required to capture higher-order dependencies, such as leverage effects and volatility clusters, especially when the underlying law undergoes abrupt regime shifts or exhibits discrete structural breaks. While Neural SDEs (Li et al. ~\cite{LiEtAl20}, Kidger et al. ~\cite{KidgerEtAl21}) and Diffusion models (Ho et al. ~\cite{HoEtAl20}) have provided a robust continuous-time framework for path-generation, they frequently lack a structural mechanism to anchor the synthesis to a rigorous, infinite-dimensional representation of the conditional path-law in the presence of jump-discontinuities.

\medskip
\noindent
This work is positioned at the intersection of path-signature theory (Lyons et al. ~\cite{LyonsEtAl07,LyonsEtAl11,LyonsEtAl22}, Chevyrev et al. ~\cite{ChevyrevEtAl16}) and generative stochastic transport (Elworthy ~\cite{Elworthy82}, Chen et al. ~\cite{ChenEtAl16}). We build directly upon the recursive filtering framework established in Bloch ~\cite{Bloch26a,Bloch26b}, which utilises the signature of the observational filtration to track a latent proxy in the Signature RKHS. While recent literature has explored the use of signatures as loss functionals in GAN-based settings (Liao et al. ~\cite{LiaoEtAl20}, Issa et al. ~\cite{IssaEtAl23}, Bayer et al. ~\cite{BayerEtAl26}), these approaches often treat the signature as a static descriptor of continuous paths. 

\medskip
\noindent
In contrast, our framework leverages the \textit{expected Marcus signature} as a dynamic target within a jump-diffusion Schr\"odinger Bridge formulation. By introducing the \textit{Anticipatory Neural Jump-Diffusion} (ANJD) and the Adaptive Variance-Normalised Signature Geometry (AVNSG) (Bloch ~\cite{Bloch25a,Bloch25b}), we extend the literature on signature kernels to c\`adl\`ag environments. This provides a metric-driven approach to spectral whitening that ensures the generative flow remains stable under heavy-tailed innovations and heteroskedastic shocks, explicitly accounting for the non-commutative nature of discrete jumps in the Skorokhod space.

\subsection{Main contributions}

The primary contributions of this paper are summarised as follows:

\begin{itemize}
    \item \textbf{Sequential Anticipatory Flow Framework:} We introduce the \textit{Anticipatory Neural Jump-Diffusion} (ANJD) architecture, a novel generative paradigm that bridges recursive filtering and path synthesis. By conditioning a non-Markovian Jump-SDE on a time-evolving path-law proxy $\hat{\Phi}_{s|t}$, we enable the sequential matching of c\`adl\`ag trajectories on restricted Skorokhod manifolds $\mathcal{D}_s$, ensuring consistency with forecasted structural breaks and non-autonomous regime shifts.
    
    \item \textbf{Theoretical Foundation of Infinitesimal MMD Flows:} We establish that the generative drift and jump intensity constitute the infinitesimal steepest descent direction for the Maximum Mean Discrepancy (MMD) functional relative to a moving target proxy. We provide a rigorous proof (Theorem~\ref{thm:mmd_minimisation}) linking the infinitesimal generator of the ANJD process to the continuous minimisation of path-law discrepancy.
    
    \item \textbf{Adaptive Variance-Normalised Signature Geometry (AVNSG):} We define a time-evolving precision operator $\mathcal{Q}_s$ that performs dynamic spectral whitening on the $(d+1)$-dimensional signature manifold. This geometry ensures stability under forecasted volatility explosions and provides a mechanism to prioritise the matching of principal structural modes during the flow.
    
    \item \textbf{Statistical Generalisation in Restricted Spaces:} We derive high-probability bounds for the generalisation error of the empirical expected signature within the $\mathcal{D}_s$ topology (Theorem~\ref{thm:generalisation_bound}). We further characterise the expressive power of the model via the Rademacher complexity of whitened signature functionals, providing explicit bounds that scale with the spectral radius of the moving AVNSG operator.
    
    \item \textbf{Scalable Implementation via Dynamic Nystr\"om Updates:} We present an efficient numerical scheme utilising Nystr\"om-compressed score-matching and $O(m^2)$ rank-1 precision updates. This allows the model to propagate the infinite-dimensional geometry through both continuous diffusion and discrete jump-discontinuities by tracking the innovation in the signature kernel feature map.
\end{itemize}

\subsection{Organisation of the paper}

The remainder of the paper is organised as follows. Section (\ref{sec:math_foundations}) establishes the mathematical foundations of path-law embeddings in the signature RKHS and introduces the AVNSG precision operator for spectral whitening. 
Section (\ref{sec:generative_dynamics}) details the construction of the Anticipatory Generative Flow, framing the synthesis task as an \textit{Anticipatory Neural Jump-Diffusion} (ANJD) process. We formalise the path evolution as a sequential matching problem on restricted Skorokhod manifolds $\mathcal{D}_s$, where the drift, diffusion, and jump intensity are dynamically regulated by the velocity of the moving target proxy $\hat{\Phi}_{s|t}$.
In Section (\ref{sec:mmd_gradient_flows}), we provide the theoretical justification for the generative drift, proving its optimality as an infinitesimal steepest descent direction in the MMD sense. Section (\ref{sec:generalisation}) derives the statistical generalisation bounds and complexity results for the whitened signature functionals within the time-evolving geometry. In Section (\ref{sec:implementation}), we detail the practical implementation of the model through joint signature score-matching and an anticipatory Euler-Maruyama-Marcus (EMM) integration scheme, utilising dynamic Nystr\"om-compressed updates to propagate the coupled jump-geometry system in $O(m^2)$ complexity.

\section{Mathematical foundations}
\label{sec:math_foundations}

In this section, we formalise the representation of probability measures over path-space as elements of the Signature RKHS and define the adaptive geometry required for non-stationary transport.

\subsection{Preliminaries: Recursive filtering and latent propagation}

We establish our framework on a complete probability space $(\Omega, \mathcal{F}, \mathbb{P})$ supporting an $\mathbb{F}$-adapted semimartingale $X$. In practice, we operate under the \textbf{observational filtration} $\mathbb{A} = (\mathcal{A}_t)_{t \ge 0}$, where $\mathcal{A}_t = \sigma(\{ (t_i, X_{t_i}, M_{t_i}) : t_i \le t \}) \subset \mathcal{F}_t$, representing the information set of irregularly sampled and masked observations. To handle this discrete data while maintaining a continuous causal structure, we utilise the \textbf{rectilinear interpolation} scheme $\tilde{X}^{\le t}$, which ensures the observed history is a continuous process of bounded variation. For notational simplicity in the subsequent sections, we shall denote the rectilinear interpolation $\tilde{X}^{\le t}$ simply as $X_t$.

\medskip
\noindent
Following Bloch ~\cite{Bloch26a,Bloch26b}, the state of the system is characterised by a \textit{conditional path-law proxy} $\Phi_{t|\mathcal{A}_t} \in \mathcal{H}_{sig}$, representing the expected signature of the process conditioned on the observational filtration $\mathcal{A}_t$.

\begin{definition}[Filtered Proxy and Jump-Flow Latent Propagation]
The proxy $\Phi_{t|\mathcal{A}_t}$ is recovered from a latent state $Z_t \in \mathcal{Z}$ via a tensorial readout map $\mathcal{P}_\theta$. The latent state $Z_t$ is a hidden controller governed by a Jump-Flow Controlled Differential Equation (CDE) that reconciles continuous drift with discrete information shocks:
\begin{equation}
    \Phi_{t|\mathcal{A}_t} = \mathcal{P}_\theta(Z_t), \quad dZ_t = f_\theta(Z_{t-}, \pi_r(X)) dt + (\rho_\theta(Z_{t-}, X_t, M_t) - Z_{t-}) dN_t,
\end{equation}
where $f_\theta$ is the continuous flow vector field, $\rho_\theta$ is the discrete rectification operator triggered by the counting process $N_t$, and $\pi_r(X)$ is the truncated signature of the path history.
\end{definition}

\medskip
\noindent
For out-of-sample synthesis, the latent state is sequentially extrapolated across the future horizon $s \in [t, t+\tau]$. In the absence of new observations ($\Delta N_u = 0$ for $u > t$), the estimator anticipates the evolution of the latent geometry by integrating the non-autonomous continuous flow, resulting in a time-evolving path-law proxy that tracks the infinitesimal deformation of the signature manifold.

\begin{definition}[Anticipatory Latent Propagation]
Given the observational filtration $\mathcal{A}_t$ and a forward path extension $\hat{X}_{t:s}$, the time-evolving path-law proxy $\hat{\Phi}_{s|t}$ is defined as the push-forward of the latent state $Z_s$ through the topological embedding $\mathcal{P}_\theta$:
\begin{equation}
    \hat{\Phi}_{s|t} = \mathcal{P}_\theta(Z_s), \quad Z_s = Z_t + \int_t^s F_\theta(Z_u, \hat{\Phi}_{u|t}) \, d\hat{X}_u
\end{equation}
where $F_\theta$ denotes the operator-valued generator of the Neural CDE drift for $s \in [t, t+\tau]$.
\end{definition}

\begin{remark}[Historical Reconstruction]
While the primary focus of this framework is the anticipatory synthesis of future trajectories, the formulation is natively symmetric with respect to the temporal direction. Specifically, the same generative mechanism can be applied to historical reconstruction or "in-sample" synthesis within a range $[t-\tau, t]$. In such cases, the latent state is conditioned on the observed filtration $\mathcal{A}_s$ and the realised path $X_{t-\tau:s}$, where the moving target becomes the filtered path-law proxy $\Phi_{s|\mathcal{A}_s}$ for $s \in [t-\tau, t]$. This dual capability ensures that the ANJD flow can be utilised both as a predictive engine for future aleatoric shocks and as a high-fidelity structural interpolator for historical data, maintaining consistency with the time-evolving signature geometry across any arbitrary sub-interval of the Skorokhod manifold.
\end{remark}

\subsection{Synthesis of the anticipatory path-drift}

The forward path extension $\hat{X}_{t:t+\tau}$ provides the necessary control for the predictive flow across the restricted Skorokhod manifolds $\mathcal{D}_s$. In this framework, the generated future path $\hat{X}_{t:s}$ is synthesised by a deterministic neural architecture, typically denoted as the \textbf{actor} or \textbf{forecaster} $\mu_\theta$. This network serves as a generative mapping that ingests the current filtered latent state $Z_t$ and its associated tensorial proxy $\hat{\Phi}_{t|t}$ to output a sequence of predicted increments $d\hat{X}_u$ across the future horizon $u \in [t, s]$. Conceptually, this construction represents the agent's \textit{ex-ante} "best guess" or imagined trajectory, providing the necessary physical grounding to evaluate the self-consistency of the underlying signature flow against the anticipated latent evolution.

\medskip
\noindent
Formally, the infinitesimal increments of the anticipated path are governed by the deterministic mapping $\mu_\theta$:
\begin{equation}
    d\hat{X}_u = \mu_\theta(u, Z_t, \hat{\Phi}_{t|t}) du, \quad u \in [t, s],
\end{equation}
such that the integrated future trajectory is recovered as:
\begin{equation}
    \hat{X}_s = X_t + \int_t^s \mu_\theta(u, Z_t, \hat{\Phi}_{t|t}) du.
\end{equation}
This drift serves as the control input for the latent propagation, ensuring that the evolution of the signature manifold is tied to a concrete, albeit synthetic, realisation of the process.

\subsection{The signature Bochner integral and path-law embeddings}

Let $\mathcal{D} = \mathcal{D}([0, T], \mathbb{R}^d)$ denote the Skorokhod space of c\`adl\`ag paths, and let $\mathcal{P}(\mathcal{D})$ be the set of Borel probability measures on $\mathcal{D}$. To ensure a universal and injective representation for jump-diffusions, we consider the time-extended path $\tilde{\gamma}_s = (s, \gamma_s)$, which embeds the temporal evolution directly into the path geometry.

\begin{definition}[Signature Mean Embedding]
For a probability measure $\mu \in \mathcal{P}(\mathcal{D})$, the \textit{path-law proxy} $\Phi_\mu \in \mathcal{H}_{sig}$ is defined as the signature Bochner integral of the time-extended Marcus-signature map $S: \mathcal{D} \to \mathcal{H}_{sig}$ over the realised paths $\gamma \in \mathcal{D}$:
\begin{equation}
    \Phi_\mu := \mathbb{E}_{\mu}[S(\tilde{\gamma})] = \int_{\mathcal{D}} S(\tilde{\gamma}) \, d\mu(\gamma).
\end{equation}
Following Yosida \cite{Yosida95}, this construction ensures that the expected signature is the unique element in the tensor algebra $\mathcal{H}_{sig}$ such that for any linear functional $L \in \mathcal{H}_{sig}^*$, the relation $L(\Phi_\mu) = \mathbb{E}_\mu[L(S(\tilde{\gamma}))]$ holds, providing a rigorous foundation for the inversion of path-laws from their non-commutative moments.
\end{definition}

\begin{remark}[Transition from Filtered to Generative Proxy]
While the filtered proxy $\Phi_{t|\mathcal{A}_t}$ introduced in preceding work (Bloch \cite{Bloch26a,Bloch26b}) serves as a retrospective point-estimate, summarising the expected path-dynamics given historical observations, the generative embedding $\Phi_\mu$ functions as a canonical representative of the conditional path-measure $\mu$. In this prospective context, $\Phi_\mu$ is treated as a moment-generating element in $\mathcal{H}_{sig}$ that uniquely characterises the distributional flow. The generative task is thus framed as the inversion of this signature Bochner integral, where we seek to synthesise an ensemble of trajectories whose collective signature moments coincide with the target proxy under the AVNSG metric.
\end{remark}

\begin{proposition}[Injectivity and Universal Approximation]
\label{pro:injectivity_characteristic_property}
The time-extended signature map $S$ is a universal and characteristic kernel on the space of c\`adl\`ag paths. Following Cuchiero et al. \cite{CuchieroEtAl25}, the inclusion of the time component ensures that the embedding $\mu \mapsto \Phi_\mu$ is injective on $\mathcal{P}(\mathcal{D})$. Furthermore, linear functionals of the signature can uniformly approximate any continuous functional on compact subsets of the Skorokhod space, justifying the use of $\Phi_\mu$ as a sufficient statistic for the law of jump-diffusions.
\end{proposition}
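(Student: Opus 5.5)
The plan is to reduce both claims to the classical algebraic structure of the signature, following Cuchiero et al.~\cite{CuchieroEtAl25}. \textbf{Universality} will come from Stone--Weierstrass, and \textbf{characteristicness} (injectivity of $\mu \mapsto \Phi_\mu$) will come from universality via a duality argument. Two points need care. First, the plain signature is only characteristic after a normalisation or a restriction to compactly supported or moment-determinate laws. Second, continuity on $\mathcal{D}$ is understood in the Skorokhod-type (Marcus-compatible) topology under which the Marcus signature is continuous. I would state both provisos explicitly, since the proposition as written is only true under them.

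\textbf{Step 1 (point separation).} For $\gamma \in \mathcal{D}$, the time-extended path $\tilde\gamma_s=(s,\gamma_s)$ has a strictly increasing first coordinate, so it has no tree-like pieces. Moreover, the Marcus interpolation of each jump (a straight segment traversed in fictitious time) is recorded unambiguously by the signature. By the uniqueness theorem for signatures of Marcus/L\'evy rough paths (Friz et al.~\cite{FrizEtAl17,FrizEtAl18}), $S(\tilde\gamma)=S(\tilde\eta)$ implies $\gamma=\eta$. Hence the linear functionals $\gamma\mapsto\langle \ell, S(\tilde\gamma)\rangle$ separate points and contain the constants (the level-zero term).

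\textbf{Step 2 (algebra structure and universality).} The shuffle identity $\langle \ell_1,S\rangle\langle \ell_2,S\rangle=\langle \ell_1\shuffle \ell_2,S\rangle$ shows that these functionals form a unital algebra. Continuity of $\gamma\mapsto S(\tilde\gamma)$ on compact $K\subset\mathcal{D}$ in the Marcus/Skorokhod $J_1$-type topology is taken from \cite{CuchieroEtAl25}. Stone--Weierstrass then gives density in $C(K)$, which proves the uniform approximation claim.

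\textbf{Step 3 (characteristic property).} Suppose $\Phi_\mu=\Phi_\nu$. Then $\mathbb{E}_\mu[L(S)]=\mathbb{E}_\nu[L(S)]$ for every linear $L$, by the Bochner relation in the definition of the signature mean embedding. For $\mu,\nu$ supported on a common compact $K$, Step 2 extends this equality to all of $C(K)$, so $\mu=\nu$ by Riesz.

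For general laws, tightness gives compacts $K_\varepsilon$ of mass at least $1-\varepsilon$. However, polynomial functionals are unbounded, so one cannot simply truncate. I would either pass to a tensor normalisation $\lambda(S)$ as in Chevyrev--Oberhauser~\cite{ChevyrevEtAl16}, which yields bounded continuous features so that a strict-topology Stone--Weierstrass applies, or impose a moment-determinacy condition (infinite radius of convergence of the expected signature).

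\textbf{Main obstacle.} The hardest step is Step 3 on the non-compact space $\mathcal{D}$. I expect to phrase the kernel statement for the normalised signature kernel and note that the identification with $\Phi_\mu$ holds under the growth assumption.
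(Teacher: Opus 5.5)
Your argument has the same skeleton as the paper's proof. Time-extension gives point separation, the shuffle identity makes linear signature functionals a unital algebra, Stone--Weierstrass gives density in $C(K)$, and density is then converted into injectivity of $\mu\mapsto\Phi_\mu$. The difference lies in that last conversion, and there your version is the more defensible one.

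The paper passes directly from density on compacts to $\mu_1=\mu_2$ for arbitrary $\mu_1,\mu_2\in\mathcal{P}(\mathcal{D})$. It does so by asserting that signature moments of jump-diffusions satisfy Hamburger-type growth conditions. This is never verified and fails in general: with a single Cauchy-distributed jump, even the level-one component of $\mathbb{E}_\mu[S(\tilde\gamma)]$ does not exist, so $\Phi_\mu$ is not even defined on all of $\mathcal{P}(\mathcal{D})$. You instead settle the compactly supported case rigorously via Riesz, correctly note that unbounded signature functionals block a naive truncation, and isolate the hypothesis that is genuinely needed.

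Bear in mind that only the infinite-radius-of-convergence branch (Chevyrev--Lyons) yields a statement about $\Phi_\mu$ itself. The normalisation branch makes $\mu\mapsto\mathbb{E}_\mu[\lambda(S(\tilde\gamma))]$ injective, which is a different embedding. The price of your route is that you prove a correctly qualified proposition rather than the one literally stated, but the literal statement cannot be proved as written.

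There is, however, one genuine (easily repaired) gap, which the paper's Step 1 shares. The Marcus signature of $\tilde\gamma$ depends only on its increments and jumps, so $S(\tilde\gamma)=S(\widetilde{\gamma+c})$ for every constant $c\in\mathbb{R}^d$. Your implication $S(\tilde\gamma)=S(\tilde\eta)\Rightarrow\gamma=\eta$ is therefore false on $\mathcal{D}$, and the functionals $\gamma\mapsto\langle\ell,S(\tilde\gamma)\rangle$ do not separate points. Concretely, on a compact $K$ made of two constant paths with different values, none of them approximates a continuous $F$ that distinguishes the two. Likewise $\delta_\gamma$ and $\delta_{\gamma+c}$ have the same embedding. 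You must either restrict to paths with a fixed initial value (natural here, since every generated path starts at $X_t$) or encode the initial value, e.g.\ by prepending a Marcus jump from $0$ to the initial point.

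Relatedly, make the topology in Step 2 explicit; it cannot be plain $J_1$. For $d\geq 2$ the second signature level is already discontinuous under uniform convergence of continuous paths, since small, fast circles carry non-vanishing L\'evy area. The compacts should therefore be taken in a topology that controls the rough-path lift, e.g.\ a Skorokhod-type $p$-variation metric, as your opening proviso implicitly requires. With these two adjustments the argument goes through.
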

See proof in Appendix (\ref{app:proof_injectivity_characteristic_property}). \\

\subsection{AVNSG metric spaces and spectral whitening}

To account for local heteroskedasticity and the non-uniform temporal distribution of jumps, we equip the Hilbert space with a time-varying metric derived from the infinitesimal variations of the time-extended signature.

\begin{definition}[Adaptive Precision Operator]
Let $\tilde{S}_t$ be the time-extended Marcus signature. Let $\Omega_t \in \mathcal{L}(\mathcal{H}_{sig})$ be the Long-Run Covariance (LRC) operator of the signature process, capturing the second-order statistics of the augmented path increments $(dt, dX_t)$. The \textit{AVNSG Precision Operator} $\mathcal{Q}_t$ is defined via the regularised inverse:
\begin{equation}
    \mathcal{Q}_t := (\Omega_t + \lambda I)^{-1}, \quad \lambda > 0.
\end{equation}
The induced AVNSG inner product is given by $\langle u, v \rangle_{\mathcal{Q}_t} = \langle u, \mathcal{Q}_t v \rangle_{\mathcal{H}_{sig}}$, defining a geometry where features, including temporal duration and jump magnitudes, are asymptotically decorrelated and variance-normalised.
\end{definition}

\medskip
\noindent
By incorporating the temporal coordinate into the LRC, $\mathcal{Q}_t$ effectively weights the relevance of path-dependent features relative to the intensity of the underlying L\'evy measure. In regions of high jump frequency, the metric compresses the importance of individual increments, whereas in quiescent periods, the precision operator amplifies the significance of the "drift" component, ensuring a consistent gradient signal for the generative flow.

\subsection{Kernel herding in tensor algebra}

The transition from the proxy $\Phi_\mu$ to representative sample paths is governed by the minimisation of the Maximum Mean Discrepancy (MMD) on the time-extended signature manifold.

\begin{lemma}[Greedy Path Reconstruction]
\label{lem:greedy_path_reconstruction}
Given a target proxy $\Phi^*$, a sequence of Dirac measures $\delta_{\gamma_i}$ is generated via the inductive herding rule over the space of time-extended paths $\tilde{\gamma} = (s, \gamma_s)$:
\begin{equation}
    \gamma_{k+1} = \arg\max_{\gamma \in \mathcal{X}} \left\langle \Phi^* - \frac{1}{k} \sum_{i=1}^k S(\tilde{\gamma}_i), \, S(\tilde{\gamma}) \right\rangle_{\mathcal{Q}_t}.
\end{equation}
The empirical average of the time-extended signatures $\hat{\Phi}_k = \frac{1}{k} \sum_{i=1}^k S(\tilde{\gamma}_i)$ converges to the target $\Phi^*$ in the $\mathcal{Q}_t$-norm at a rate of $\mathcal{O}(1/k)$, ensuring that the reconstructed ensemble captures the non-commutative moments and temporal evolution of the underlying measure.
\end{lemma}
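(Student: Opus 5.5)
The plan is to recast the herding rule as a Frank--Wolfe (conditional gradient) scheme in the Hilbert space $(\mathcal{H}_{sig}, \langle\cdot,\cdot\rangle_{\mathcal{Q}_t})$, following the classical analysis of Chen, Welling and Smola and of Bach, Lacoste-Julien and Obozinski, with two adaptations. First, the $\mathcal{Q}_t$-weighted inner product is a genuine Hilbert inner product. Second, the feature map $\gamma \mapsto S(\tilde\gamma)$ is bounded on the admissible set $\mathcal{X}$.

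\textbf{Setup.}
\begin{itemize}
\item Since $\Omega_t$ is positive semidefinite and bounded, $\mathcal{Q}_t = (\Omega_t+\lambda I)^{-1}$ is bounded, self-adjoint and positive. Its spectrum lies in $[(\|\Omega_t\|+\lambda)^{-1}, \lambda^{-1}]$, so $\|\cdot\|_{\mathcal{Q}_t}$ is equivalent to $\|\cdot\|_{\mathcal{H}_{sig}}$.
\item I take $\mathcal{X}$ to be a compact subset of $\mathcal{D}$ on which the time-extended Marcus signature is continuous. By the factorial decay of signature levels for paths of bounded $p$-variation, this gives $\sup_{\gamma\in\mathcal{X}}\|S(\tilde\gamma)\|_{\mathcal{Q}_t}\le R<\infty$.
\item Let $\mathcal{M}=\overline{\mathrm{conv}}\{S(\tilde\gamma):\gamma\in\mathcal{X}\}$ be the marginal polytope. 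I assume the target $\Phi^*=\Phi_\mu$ comes from some $\mu\in\mathcal{P}(\mathcal{X})$, so $\Phi^*\in\mathcal{M}$ by the Bochner-integral definition.
\end{itemize}

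\textbf{Frank--Wolfe identification.} Set $J(\Phi)=\tfrac12\|\Phi-\Phi^*\|_{\mathcal{Q}_t}^2$. Then $\nabla J(\hat\Phi_k)=\hat\Phi_k-\Phi^*$, and the herding step $\arg\max_\gamma\langle \Phi^*-\hat\Phi_k, S(\tilde\gamma)\rangle_{\mathcal{Q}_t}$ is the linear minimisation oracle over $\mathcal{M}$. The existence of the maximiser follows from compactness of $\mathcal{X}$ and continuity of $S$. The update $\hat\Phi_{k+1}=\frac{k}{k+1}\hat\Phi_k+\frac{1}{k+1}S(\tilde\gamma_{k+1})$ is the Frank--Wolfe step with step size $1/(k+1)$.

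I will read the bracket in the statement as the herding weight vector $w_k=\Phi^*-\hat\Phi_k$ up to scaling. The standard herding recursion $w_{k+1}=w_k+\Phi^*-S(\tilde\gamma_{k+1})$ gives $w_k = w_0 + k(\Phi^*-\hat\Phi_k)$, hence
\[
\|\hat\Phi_k-\Phi^*\|_{\mathcal{Q}_t}\le \frac{\|w_k\|_{\mathcal{Q}_t}+\|w_0\|_{\mathcal{Q}_t}}{k}.
\]

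\textbf{Main obstacle: boundedness of $\|w_k\|_{\mathcal{Q}_t}$.} Expanding the recursion,
\[
\|w_{k+1}\|^2=\|w_k\|^2-2\langle w_k,S(\tilde\gamma_{k+1})-\Phi^*\rangle+\|\Phi^*-S(\tilde\gamma_{k+1})\|^2 .
\]
The last term is at most $4R^2$, and the cross term is non-positive by the argmax. Boundedness requires the cross term to dominate $4R^2$ once $\|w_k\|$ is large. This holds exactly when $\Phi^*$ lies in the relative interior of $\mathcal{M}$, i.e. some ball $B(\Phi^*,r)\cap\mathrm{aff}(\mathcal{M})\subset\mathcal{M}$. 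In that case $\langle w_k,S(\tilde\gamma_{k+1})-\Phi^*\rangle\ge r\|w_k\|$, and an induction yields $\|w_k\|\le\max(\|w_0\|,2R^2/r+2R)$.

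This interior condition is the delicate point, since in infinite dimensions $\mathcal{M}$ typically has empty interior. I will try first to work in a finite truncation: the level-$N$ truncated signature, or the $m$-dimensional Nyström subspace used later in the paper. There $\mathcal{M}$ is a finite-dimensional compact convex set and the interior assumption is checkable. The $\mathcal{O}(1/k)$ rate is then stated for that projection, and the operator-norm equivalence transfers it to the $\mathcal{Q}_t$-norm with constants depending on $\lambda$ and $\|\Omega_t\|$. Without the interior assumption, I fall back to the generic Frank--Wolfe bound $\mathcal{O}(1/\sqrt{k})$, which uses only $R$, and flag the $\mathcal{O}(1/k)$ rate as conditional.

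\textbf{Interpretation.} By Proposition~\ref{pro:injectivity_characteristic_property}, $\|\hat\Phi_k-\Phi^*\|_{\mathcal{Q}_t}\to0$ implies that $\frac1k\sum_i\delta_{\gamma_i}$ converges to $\mu$ in MMD. This gives the claim about capturing non-commutative moments and temporal evolution.
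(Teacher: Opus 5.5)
Your analysis is sound, but it takes a different route from the paper and aims at a stronger claim than the one the paper actually establishes.

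\textbf{What the paper does.} The paper never introduces herding weights. It works directly with $E_k=\Phi^*-\hat\Phi_k$ and expands $\|E_{k+1}\|_{\mathcal{Q}_t}^2$ from $E_{k+1}=\frac{k}{k+1}E_k+\frac{1}{k+1}(\Phi^*-S(\tilde\gamma_{k+1}))$. It discards the cross term by exactly your argument: the argmax together with $\Phi^*\in\mathcal{M}$. This gives $\|E_{k+1}\|_{\mathcal{Q}_t}^2\le\frac{k^2}{(k+1)^2}\|E_k\|_{\mathcal{Q}_t}^2+\frac{R^2}{(k+1)^2}$ with $R=\sup_\gamma\|\Phi^*-S(\tilde\gamma)\|_{\mathcal{Q}_t}$, and induction yields $\|E_k\|_{\mathcal{Q}_t}^2\le R^2/k$.

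So the paper's ``$\mathcal{O}(1/k)$'' is a rate for the \emph{squared} discrepancy, equivalently $\mathcal{O}(1/\sqrt{k})$ in norm. It is obtained unconditionally, with no interior hypothesis and no truncation. Under that reading your fallback is the whole proof, and you should commit to it rather than flag the lemma as conditional.

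Do write the fallback out this way, though. The clean $R^2/k$ comes from the contraction factor $k^2/(k+1)^2$, which the non-positive cross term makes available. The textbook Frank--Wolfe bound with step $1/(k+1)$ only carries the factor $k/(k+1)$. That gives $\mathcal{O}(\log k/k)$ for the squared error, not the $\mathcal{O}(1/\sqrt{k})$ norm rate you quote.

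\textbf{What your route buys.} It targets the literal norm rate $\|E_k\|_{\mathcal{Q}_t}=\mathcal{O}(1/k)$. With $w_0=0$ you have $w_k=kE_k$, so the argmax coincides with the stated rule. Your diagnosis of the route's limits is correct. Under your assumptions $\mathcal{M}$ is compact (Mazur), so it contains no ball of an infinite-dimensional affine subspace. Hence the relative-interior condition fails as soon as the signature features of $\mathcal{X}$ span an infinite-dimensional space, which the characteristic property forces whenever $\mathcal{X}$ is infinite. That is a fair observation: the norm reading of the lemma is not proved by the paper either.

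\textbf{A caution on your truncation step.} Suppose the ``operator-norm equivalence'' is meant to transfer the rate to the full $\mathcal{Q}_t$-discrepancy. It cannot do so. Equivalence of $\|\cdot\|_{\mathcal{Q}_t}$ and $\|\cdot\|_{\mathcal{H}_{sig}}$ compares two norms of the same vector. It gives no control of the discarded component $(I-P)(\hat\Phi_k-\Phi^*)$. In addition, herding on projected features is a different algorithm from the stated rule. That route therefore proves a modified lemma about the projected discrepancy, not the stated one.
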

See proof in Appendix (\ref{app:proof_greedy_path_reconstruction}). \\

\section{Generative path-law dynamics}
\label{sec:generative_dynamics}

This section details the transition from the recursive filtering of the latent proxy to the synthesis of sample paths via a conditioned stochastic flow.

\subsection{The VJF-encoder and latent initialisation}

The filtered latent state $Z_t \in \mathcal{Z}$ from the VJF-Kernel serves as a compressed representation of the filtration $\mathcal{A}_t$. We define the encoding process that bridges the filtering manifold to the generative path-space.

\begin{definition}[Manifold-Conditioned Initialisation]
Let $\mathcal{E}_\theta: \mathcal{Z} \to \mathbb{R}^d$ be a learned encoding map. The generative process for a future horizon $s \in [t, t+\tau]$ is initialised at the current filtered observation $X_t$, with the drift dynamics conditioned on the latent proxy:
\begin{equation}
    X_{t|t} = X_t, \quad V_t = \mathcal{E}_\theta(Z_t).
\end{equation}
The vector $V_t$ encapsulates the local velocity and curvature constraints inherited from the historical path-geometry.
\end{definition}

\begin{remark}[Readout vs. Encoding Maps]
It is critical to distinguish the encoding map $\mathcal{E}_\theta$ from the tensorial readout map $\mathcal{P}_\theta$ utilised in the filtering stage. While $\mathcal{P}_\theta: \mathcal{Z} \to \mathcal{H}_{sig}$ recovers the global coordinate-free representation of the path-law proxy, the encoding map $\mathcal{E}_\theta: \mathcal{Z} \to \mathbb{R}^d$ performs a local projection back into the physical tangent space. This ensures that the generative SDE is seeded with initial conditions, such as instantaneous velocity and local trend, that are consistent with the latent manifold's geometry, effectively bridging the abstract Hilbert space with the concrete path-space realisation.
\end{remark}

\subsection{The anticipatory path-SDE}

The evolution of the synthetic trajectories is governed by an \textit{Anticipatory Neural Jump-Diffusion} (ANJD) process, where the drift, diffusion, and jump intensity are explicitly regularised by the clock $s$, the forecasted path-law proxy $\hat{\Phi}_{s|t}$, and the adaptive geometry $\mathcal{Q}_s$.

\begin{definition}[Anticipatory Generative Flow]
Let $(\Omega, \mathcal{F}, \{\mathcal{F}_s\}_{s \ge t}, \mathbb{P})$ be a filtered probability space. The generative path $X_s$ for $s \in [t, t+\tau]$ is defined as the unique c\`adl\`ag solution to the following time-augmented path-dependent Jump-SDE:
\begin{equation} 
\label{eq:anticipatory_generative_flow}
    dX_s = f_\theta(s, X_s, \hat{\Phi}_{s|t}) \, ds + g_\theta(s, X_s, \hat{\Phi}_{s|t}) \, \diamond dW_s + h_\theta(s, X_{s-}, \hat{\Phi}_{s|t}) \, dN_s
\end{equation}
where $\diamond$ denotes the Marcus integration (to ensure the solution remains on the appropriate manifold), $W_s$ is a $d$-dimensional $\mathcal{F}_s$-Wiener process, and $N_s$ is a non-homogeneous Poisson process with an $\mathcal{F}_s$-predictable intensity $\lambda_s = \lambda_\theta(s, X_{s-}, \hat{\Phi}_{s|t})$. The model parameters $\theta = \{ \theta_f, \theta_g, \theta_h, \theta_\lambda \}$ parameterise the drift $f$, diffusion $g$, jump-amplitude $h$, and intensity $\lambda$, respectively. 
We assume the coefficients $f, g, h$ satisfy the required Lipschitz and linear growth conditions in their spatial arguments to ensure the existence of a unique strong solution. The continuous part of Eq. (\ref{eq:anticipatory_generative_flow}) is interpreted in the Marcus sense to ensure the signature remains group-valued across discontinuities.
\end{definition}

\begin{proposition}[Structural Coupling and Jump-Aware Dynamics]
\label{pro:structural_coupling_and_dynamics}
The Anticipatory Generative Flow defined in Eq. (\ref{eq:anticipatory_generative_flow}) constitutes a novel class of Neural Jump-SDEs characterised by the following properties:
\begin{enumerate}
    \item \textbf{C$^1$-Boundary Consistency:} The drift $f_\theta$ is constrained by the initial boundary condition $f_\theta(t, X_t, \hat{\Phi}_{t|t}) = V_t$, ensuring first-order continuity between the historical trajectory and the generated flow at the junction $s=t$.
    \item \textbf{Polynomial Tractability and Universality:} Following Cuchiero et al. \cite{CuchieroEtAl25}, we justify the coupling $(f_\theta, g_\theta, h_\theta, \lambda_\theta)$ to the signature proxy $\hat{\Phi}_{s|t}$ and the clock $s$ by noting that L\'evy-type signature models are polynomial processes on the extended tensor algebra. This ensures that the law of the process can be evolved and "pushed" by linear functionals of the time-extended signature, providing a universal representation for any continuous functional of c\`adl\`ag paths.
    \item \textbf{Infinitesimal Signature Matching:} The drift $f_\theta$ is functionally coupled to the latent path-law proxy such that the expected infinitesimal signature of the ensemble aligns with the tangent of the push-forward mapping in the RKHS. Specifically, the drift satisfies the differential matching:
    \begin{equation}
        d \mathbb{E}_{\mu_s}[S(s, X_s)] \approx \nabla_s \hat{\Phi}_{s|t} \, ds = \nabla_{Z_s} \mathcal{P}_\theta \cdot F_\theta(Z_s, \hat{\Phi}_{s|t}) \, d\hat{X}_s,
    \end{equation}
    where $\nabla_{Z_s} \mathcal{P}_\theta$ is the Jacobian of the topological embedding, ensuring the flow reacts to the manifold dynamics of the latent state $Z_s$.
    \item \textbf{Discontinuous Structural Breaks:} The inclusion of the $\mathcal{F}_s$-predictable intensity $\lambda_s = \lambda_\theta(s, X_{s-}, \hat{\Phi}_{s|t})$ enables the flow to exhibit jump-discontinuities. This allows the model to trigger endogenous "shocks" or regime shifts that are structurally conditioned on the absolute time and the anticipated geometry of the path-law.
    \item \textbf{Non-Gaussianity and Tail Risk:} The joint non-linear dependence of the diffusion $g_\theta$ and jump-amplitude $h_\theta$ on $(s, \hat{\Phi}_{s|t})$ allows the transition densities to capture extreme kurtosis and heavy-tailed innovations, providing a mechanism for modeling black-swan events consistent with the signature manifold.
    \item \textbf{Non-Markovian Path-Dependency:} As $\hat{\Phi}_{s|t}$ provides a non-commutative summary of the path's filtered history, the process $X_s$ is inherently non-Markovian. This ensures the generative flow captures long-range dependencies and high-order statistical effects, such as path-dependent volatility and leverage.
    \item \textbf{C\`adl\`ag Regularity:} The sample paths of $X_s$ are almost surely c\`adl\`ag. This property preserves the local diffusive regularity provided by $W_s$ while rigorously accommodating the discrete jumps driven by the Poisson component $N_s$.
\end{enumerate}
\end{proposition}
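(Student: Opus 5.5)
The plan is to verify the seven properties one at a time. Most are direct consequences of the well-posedness assumptions in the definition of the Anticipatory Generative Flow, together with Proposition~\ref{pro:injectivity_characteristic_property}. Only items 3 and 2 need a genuine argument.

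\textbf{Standing well-posedness (items 4 and 7).} Freeze the exogenous deterministic input $s \mapsto \hat{\Phi}_{s|t}$. This path is continuous, since $Z_s$ solves a CDE driven by the absolutely continuous control $\hat{X}$ and $\mathcal{P}_\theta$ is continuous. With the input frozen, Eq.~(\ref{eq:anticipatory_generative_flow}) is a time-inhomogeneous jump-SDE with Lipschitz, linear-growth coefficients. I would localise the intensity by thinning: realise $N$ from a Poisson random measure on $[t,t+\tau]\times\mathbb{R}_+$ with the jump counted when $r \le \lambda_\theta(s,X_{s-},\hat{\Phi}_{s|t})$. I would assume a local bound $\lambda_\theta \le \bar\lambda$, so that only finitely many jumps occur on the horizon almost surely.

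Between consecutive jump times, the Marcus integral against the continuous $W$ reduces to a Stratonovich integral. Standard Picard iteration then gives a unique strong continuous solution on each interval. At a jump time $\sigma$, the interlacing construction sets $X_\sigma = X_{\sigma-}+h_\theta(\sigma,X_{\sigma-},\hat{\Phi}_{\sigma|t})$. Concatenating the pieces yields a unique càdlàg solution, which is item 7. Predictability of $\lambda_s$ follows because it is a continuous function of $X_{s-}$ and of deterministic quantities, which is item 4.

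\textbf{Items 1, 5 and 6.} Item 1 is a constraint on the parametrisation rather than a theorem. I would realise it explicitly by setting
\[
f_\theta(s,x,\phi) = V_t + (s-t)\,\tilde f_\theta(s,x,\phi)
\]
with $\tilde f_\theta$ bounded, so the drift at $s=t$ equals $V_t$ exactly. Then $\lim_{s\downarrow t}\mathbb{E}[(X_s-X_t)]/(s-t)=V_t$ up to the Stratonovich correction and $\lambda\,\mathbb{E}[h]$. One must state that ``first-order continuity'' is meant in this compensated sense, or else impose $g\equiv0$ and $\lambda\equiv0$ at $s=t$.

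For item 5, I would exhibit a conditional mixture. Given $N$, the increment is a Gaussian-type diffusion plus a compound sum of $h$ terms. State-dependent $g$ and $h$ give a nonzero excess kurtosis, computable from the fourth cumulant, which is about $\lambda\,\mathbb{E}[h^4]\,\Delta s$ at small time. This yields non-Gaussianity.

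For item 6, I would note that the coefficients depend on $\hat{\Phi}_{s|t}$, which is a functional of $Z_t$, hence of $\mathcal{A}_t$. So $X$ is Markov only in the enlarged state $(s,X_s,Z_s)$ and not in $X_s$ alone. A counterexample with two histories having equal $X_t$ but different $Z_t$ makes this rigorous.

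\textbf{Items 3 and 2 (main obstacle).} For item 3, I would apply the Marcus Itô formula to the signature of the time-extended path, $S(s,X_s)$. Taking expectations gives
\[
d\,\mathbb{E}_{\mu_s}[S] = \mathbb{E}\big[\mathcal{L}_s S\big]\,ds ,
\]
where $\mathcal{L}_s$ is the generator. It combines the drift term $S\otimes(1,f)$, the Stratonovich second-order term from $g$, and the jump term $\lambda\,(S\otimes(\exp_\otimes(0,h)-1))$; the last one comes from the Marcus convention, which makes jumps act by tensor exponentials.

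On the other side, differentiating $\hat{\Phi}_{s|t}=\mathcal{P}_\theta(Z_s)$ by the chain rule gives
\[
\nabla_{Z}\mathcal{P}_\theta\,F_\theta(Z_s,\hat{\Phi}_{s|t})\,\mu_\theta\,ds .
\]
The claim ``$\approx$'' then amounts to saying that $f_\theta$ can be chosen so that the residual is small. Here I would invoke Cuchiero et al.: Lévy-type signature models are polynomial, so $\mathbb{E}[\mathcal{L}_sS]$ is linear in the truncated expected signature with coefficients affine in $(f,gg^\top,\lambda,h)$. Universality (item 2, via Proposition~\ref{pro:injectivity_characteristic_property}) then lets a linear-functional parametrisation of $f_\theta$ drive the projected residual to any $\varepsilon$ at a fixed truncation level.

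The hard step is this solvability: the map from drift to $\mathbb{E}[\mathcal{L}_sS]$ is not surjective onto $\mathcal{H}_{sig}$. I would therefore prove only the weaker statement that $f_\theta$ minimises the $\mathcal{Q}_s$-norm of the residual, which is a least-squares projection, and interpret $\approx$ as that projection.
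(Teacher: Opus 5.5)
Your decomposition is the paper's own. It is an item-by-item check: items 4 and 7 come from well-posedness of the jump-SDE, item 5 from a Gaussian/compound-Poisson mixture, item 6 from the coefficients' dependence on $\hat{\Phi}_{s|t}$ and hence on the pre-$t$ history, and items 2--3 from the Marcus--It\^o generator plus the polynomial property of Cuchiero et al. Where you depart from the paper you are more careful, and in two places you rightly prove less than the literal claim.

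In item 1 the paper's argument rests on $dW_s$ and $dN_s$ having zero expected infinitesimal increment. This is false for $N$, whose compensator is $\lambda_s\,ds$, and it ignores the Stratonovich correction that the Marcus convention produces. Your computation gives, under $f_\theta(t,\cdot)=V_t$, $\lim_{s\downarrow t}\mathbb{E}[X_s-X_t]/(s-t)=V_t+\tfrac12\sum_j (D_x g_{\cdot j})\,g_{\cdot j}+\lambda_\theta h_\theta$, with coefficients at $(t,X_t,\hat{\Phi}_{t|t})$. So the constraint alone does not yield first-order consistency, and your compensated reading (or extra vanishing conditions) is genuinely needed.

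In item 3 the paper simply stipulates that drift and jump logic are parameterised so that the matching holds. Your least-squares reading of $\approx$ is an honest substitute. Your generator is also more faithful: you write it for the signature process itself, $d\mathbb{S}_s=\mathbb{S}_{s-}\otimes\circ\, d\tilde X_s$ with jumps acting by $\exp_\otimes(0,h)$. The paper instead applies $\mathcal{L}$ to $S(s,X_s)$ as if the signature were a function of the current point.

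For items 4 and 7, your thinning/interlacing construction replaces the paper's one-line appeal to standard SDE theory. It also surfaces an intensity hypothesis the paper never states.

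Three things to tighten.

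\emph{Polynomial closure.} The closure you take from Cuchiero et al.\ (expected generator linear in the truncated expected signature, coefficients affine in the triplet) holds for L\'evy-type drivers with state-independent triplets, or for coefficient structures that are themselves polynomial in the signature. For neural coefficients depending on $X_s$, $\mathbb{E}_{\mu_s}[\mathcal{L}_s\mathbb{S}]$ contains mixed moments such as $\mathbb{E}_{\mu_s}[\mathbb{S}_s\otimes(0,f_\theta(s,X_s,\hat{\Phi}_{s|t}))]$. These are not functions of $\mathbb{E}_{\mu_s}[\mathbb{S}_s]$. Also, at fixed truncation the jump term is polynomial, not affine, in $h$. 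The paper's constants $c_{w,v}$ ``derived from the L\'evy triplet'' make the same overreach. Your least-squares conclusion survives, but derive it instead from the fact that $f_\theta(s,\cdot)\mapsto\mathbb{E}_{\mu_s}[\mathcal{L}_s\mathbb{S}]$ is affine for fixed $\mu_s$. This holds because the law on $[t,s]$ does not depend on the drift's value at the instant $s$.

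\emph{Well-posedness hypotheses.} The intensity bound must be global, or paired with a growth condition. With $\lambda_\theta(x)=1+|x|^2$ (locally bounded) and $h_\theta(x)=x$ (Lipschitz), jump times can accumulate in finite time. Likewise, reading the diffusion in the Stratonovich sense needs $g_\theta$ to be $C^1$ in $x$ with a Lipschitz correction term, which the stated Lipschitz hypothesis does not give.

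\emph{Scope of item 6.} Your frozen-input construction shows that, conditionally on $\mathcal{A}_t$, $X$ is a time-inhomogeneous Markov process on $[t,t+\tau]$. Item 6 is therefore a statement about the concatenated history-plus-forecast process, which is exactly what your counterexample and the paper's argument establish. Neither argument delivers within-horizon path dependence such as leverage.
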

See proof in Appendix (\ref{app:proof_structural_coupling_and_dynamics}). \\

\subsection{Schr\"odinger bridges in signature RKHS}

To ensure the ensemble of generated c\`adl\`ag paths $\mu$ remains consistent with the evolving path-law, we formulate the generative task as a sequential constrained optimal transport problem on the Skorokhod manifold using the time-extended path representation. Unlike static bridge formulations, the ANJD flow targets the moving proxy $\hat{\Phi}_{s|t}$, effectively solving a time-continuous sequence of infinitesimal Schr\"odinger Bridge problems.

\begin{proposition}[Jump-Diffusion Entropy Minimisation] 
\label{pro:jump_diffusion_entropy_minimisation}
Let $\mathbb{P}_0$ be a prior jump-diffusion law on the Skorokhod space $\mathcal{D}$. The optimal generative measure $\mu^*_s$ at any horizon $s \in [t, t+\tau]$ is the solution to the entropic regularisation problem:
\begin{equation}
    \mu^*_s = \arg\min_{\mu \in \mathcal{P}(\mathcal{D}_s)} \text{KL}(\mu \| \mathbb{P}_0) \quad \text{s.t.} \quad \int_{\mathcal{D}_s} S(\tilde{\gamma}) \, d\mu(\gamma) = \hat{\Phi}_{s|t},
\end{equation}
where $S(\tilde{\gamma})$ is the Marcus-sense signature of the time-extended path $\tilde{\gamma}_u = (u, \gamma_u)_{u \in [t, s]}$. The solution $\mu^*_s$ admits a Radon-Nikodym derivative 
\[
\frac{d\mu^*_s}{d\mathbb{P}_0} \propto \exp\left(\langle \alpha_s, S(\tilde{\gamma}) \rangle_{\mathcal{H}_{sig}}\right)
\] 
for a time-varying dual vector $\alpha_s \in \mathcal{H}_{sig}$. In the AVNSG geometry, $\alpha_s$ is dynamically aligned with the principal eigenvectors of the precision operator $\mathcal{Q}_s$, ensuring that the drift and jump-intensities are infinitesimally rectified to track the moving target $\hat{\Phi}_{s|t}$ while minimising deviation from the prior stochastic texture.
\end{proposition}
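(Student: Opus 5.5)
The plan is to treat the problem as a moment-constrained relative-entropy minimisation, in the spirit of the Gibbs variational principle, and to solve it by Lagrangian (Fenchel) duality in $\mathcal{H}_{sig}$. To make the constraint finite-dimensional and the duality rigorous, I will first work at truncation level $N$. There the signature constraint $\int \pi_N S(\tilde{\gamma})\,d\mu = \pi_N \hat{\Phi}_{s|t}$ is a finite family of linear moment conditions. I will assume that $\pi_N\hat{\Phi}_{s|t}$ lies in the relative interior of the convex set $\{\pi_N\Phi_\mu : \mu \ll \mathbb{P}_0\}$, and that $\mathbb{P}_0$ has exponential moments of the truncated signature. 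The latter holds for jump-diffusions with light-tailed L\'evy measure, since the Marcus signature of a L\'evy rough path has finite exponential moments in that case. Under these assumptions the feasible set is nonempty, convex and weakly closed. $\text{KL}(\cdot\|\mathbb{P}_0)$ is strictly convex with weakly compact sublevel sets, so a unique minimiser $\mu^{*,N}_s$ exists.

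Next I will identify the minimiser's form. For $\alpha \in \pi_N\mathcal{H}_{sig}$, define the dual function
\[
\Lambda(\alpha)=\log \mathbb{E}_{\mathbb{P}_0}\bigl[\exp\langle \alpha, S(\tilde{\gamma})\rangle\bigr]-\langle \alpha,\hat{\Phi}_{s|t}\rangle .
\]
This function is convex and differentiable, and its gradient at $\alpha$ equals the tilted moment minus the target. For any feasible $\mu$, let $\mu_\alpha$ denote the exponentially tilted measure $d\mu_\alpha \propto e^{\langle\alpha,S\rangle}d\mathbb{P}_0$. Then
\[
\text{KL}(\mu\|\mathbb{P}_0)=\text{KL}(\mu\|\mu_\alpha)+\langle\alpha,\hat{\Phi}_{s|t}\rangle-\log Z(\alpha) .
\]
Minimising $\Lambda$ gives $\alpha_s^N$ with $\nabla\Lambda(\alpha^N_s)=0$, which means $\mu_{\alpha^N_s}$ is feasible, and the identity then shows it is optimal. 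Coercivity of $\Lambda$ follows from the relative-interior assumption. The injectivity statement of Proposition~\ref{pro:injectivity_characteristic_property} guarantees that the coordinates of $S$ are not $\mathbb{P}_0$-a.s. affinely dependent, and this yields strict convexity and uniqueness of $\alpha^N_s$. This gives the Radon--Nikodym form with $\langle\alpha_s,\cdot\rangle_{\mathcal{H}_{sig}}$.

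The main obstacle is the passage $N\to\infty$, where the constraint set is infinite-dimensional. Here Slater-type conditions fail in general, and the dual optimum may not be attained in $\mathcal{H}_{sig}$ itself. My first attempt will be a projective-limit argument. The minimal values $\text{KL}(\mu^{*,N}_s\|\mathbb{P}_0)$ are nondecreasing and bounded by the KL of any feasible measure. Lower semicontinuity, together with the characteristic property (Proposition~\ref{pro:injectivity_characteristic_property}), identifies the weak limit as the unique feasible minimiser. To obtain the exponential form in the limit, I will require that $\alpha^N_s$ be bounded in $\mathcal{H}_{sig}$. I will try to secure this by regularising the dual with the AVNSG norm, i.e. replacing exact matching by a penalty $\tfrac{1}{2}\|\Phi_\mu-\hat{\Phi}_{s|t}\|^2_{\mathcal{Q}_s^{-1}}$ that vanishes in a limit. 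Failing that, I will state the result with $\alpha_s$ in a completion, or as an $L^1(\mathbb{P}_0)$-limit of $\langle\alpha^N_s,S\rangle$ (a Csisz\'ar generalised I-projection).

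For the AVNSG alignment claim, I will use the first-order condition in the $\mathcal{Q}_s$-geometry. With the penalised dual, stationarity reads
\[
\alpha_s=\mathcal{Q}_s\bigl(\hat{\Phi}_{s|t}-\Phi_{\mu^*_s}\bigr).
\]
So $\alpha_s$ is the precision-whitened residual, and its components are weighted by the eigenvalues $(\omega_i+\lambda)^{-1}$. This is the precise sense in which $\alpha_s$ is dominated by the principal eigenvectors of $\mathcal{Q}_s$.

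Finally, for time dependence, I will apply the implicit function theorem to $\nabla\Lambda_s(\alpha)=0$. The Hessian of $\Lambda_s$ is the tilted covariance of $S$, which is positive definite. This shows $s\mapsto\alpha_s$ is differentiable when $s\mapsto\hat{\Phi}_{s|t}$ is, with
\[
\dot\alpha_s=\mathrm{Cov}_{\mu^*_s}(S)^{-1}\,\partial_s\hat{\Phi}_{s|t} .
\]
Through Girsanov's theorem for jump-diffusions, this translates into the infinitesimal drift and intensity corrections that track the moving target.
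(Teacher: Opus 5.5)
Your route, Lagrangian/Fenchel duality giving a Gibbs tilt with dual $\log Z_s(\alpha)-\langle\alpha,\hat{\Phi}_{s|t}\rangle$, is the rigorous counterpart of the paper's formal G\^ateaux-derivative computation. However, the hypothesis your finite-level step rests on is false, and this is exactly the point on which the paper is also silent. Linear functionals of signature levels $\geq 3$ have no exponential moments under a jump-diffusion prior. By the shuffle identity, the word $i\cdots i$ of length $k$ gives the coordinate $(\gamma^i_s-\gamma^i_t)^k/k!$. Already for a Brownian component, $\mathbb{E}_{\mathbb{P}_0}[\exp(\epsilon(\gamma^i_s-\gamma^i_t)^3)]=\infty$ for every $\epsilon\neq 0$, and compound-Poisson parts fail likewise through the unbounded jump count. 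Exponential-integrability results for rough paths concern the homogeneous $p$-variation norm, and this only yields tails of order $\exp(-cx^{2/k})$ at level $k$. Hence, for $N\geq 3$, the log-partition function is $+\infty$ at points arbitrarily close to $0$, so $\Lambda$ is not differentiable there. Level-$3$ coordinates are also not uniformly integrable on KL-sublevel sets, so the feasible set is not weakly closed, and a minimiser need not exist even for interior targets. Concretely, let $d=1$, let $\mathbb{P}_0$ be Brownian on $[t,s]$ with $T=s-t$, and impose only $\mathbb{E}_\mu[(\gamma_s-\gamma_t)^3]=a>0$. Take the mixtures $(1-\epsilon)\mathbb{P}_0+\epsilon\,\mathbb{P}^{(M)}$, with $\mathbb{P}^{(M)}$ Brownian motion with drift $M/T$ and $\epsilon=a/(M^3+3MT)$. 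These are feasible and satisfy $\text{KL}\leq\epsilon M^2/(2T)\to 0$ as $M\to\infty$. So the infimum $0$ is not attained, and $e^{\theta(\gamma_s-\gamma_t)^3}$ is not $\mathbb{P}_0$-integrable for any $\theta\neq 0$. Thus both ``a unique minimiser exists'' and ``$\nabla\Lambda(\alpha^N_s)=0$'' fail for exactly the constraints the statement imposes. You need an extra hypothesis restoring exponential integrability, for instance constraining only levels $\leq 2$, or using a bounded, normalised signature feature map.

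The passage $N\to\infty$ meets a more basic obstruction, which your own appeal to Proposition~\ref{pro:injectivity_characteristic_property} exposes. If $\mu\mapsto\Phi_\mu$ is injective, the full constraint $\Phi_\mu=\hat{\Phi}_{s|t}$ has at most one solution. The entropy minimisation is then vacuous, and $\mu^*_s$ is simply the measure realising $\hat{\Phi}_{s|t}$. Take $\nu=\mathbb{P}_0(\cdot\mid A)$ with $0<\mathbb{P}_0(A)<1$ and set $\hat{\Phi}_{s|t}:=\Phi_\nu$. The optimiser is $\nu$, whose density $\mathbf{1}_A/\mathbb{P}_0(A)$ vanishes on a set of positive prior mass. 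It is not $\propto e^{\langle\alpha,S\rangle}$ for any $\alpha\in\mathcal{H}_{sig}$, and its logarithm is not an $L^1(\mathbb{P}_0)$-limit of $\langle\alpha^N_s,S\rangle$. So no control on $\alpha^N_s$ can recover the stated exponential form. Your fallbacks are therefore not technical weakenings but a different theorem: Csisz\'ar's form $c\,e^{g}$ on a support set, and $0$ off it.

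The AVNSG step does not help either. $\alpha=\mathcal{Q}_s(\hat{\Phi}_{s|t}-\Phi_\mu)$ is the first-order condition of the \emph{penalised} problem, and only when the penalty is weighted by $\mathcal{Q}_s$. With your $\mathcal{Q}_s^{-1}$ weight, in the paper's convention $\|u\|^2_{\mathcal{Q}}=\langle u,\mathcal{Q}u\rangle$, it gives $(\Omega_s+\lambda I)(\hat{\Phi}_{s|t}-\Phi_\mu)$ instead. At the constrained optimum the residual vanishes. In an infinite-penalty limit, $\alpha=\epsilon^{-1}\mathcal{Q}_s r_\epsilon$ with $r_\epsilon\to 0$ of uncontrolled spectral profile. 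Since $\mathcal{Q}_s$ is invertible, every vector has the form $\mathcal{Q}_s r$, so this proves no alignment with its principal eigenvectors; the paper only asserts that alignment heuristically.

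Two smaller slips remain. First, the pure-time words give deterministic coordinates $(s-t)^k/k!$, so the coordinates of $S$ \emph{are} $\mathbb{P}_0$-a.s.\ affinely dependent, and injectivity of the mean embedding says nothing about this. Consequently $\alpha^N_s$ is unique only modulo those directions. Second, your formula for $\dot\alpha_s$ omits the explicit $s$-dependence of $\log Z_s$ through $\mathcal{D}_s$.
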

See proof in Appendix (\ref{app:proof_jump_diffusion_entropy_minimisation}). \\

\subsection{Synthesis of control and structural modulation}

The synthesis of forward-looking c\`adl\`ag trajectories is governed by a tripartite control mechanism that bifurcates the generative task into topological anchoring, intensity modulation, and structural regulation. A single forward path extension $\hat{X} \in \mathcal{D}$, constructed as a learned secondary Neural Jump-ODE, provides the \textit{physical control} for the latent manifold. This extension acts as the driving signal for the underlying Neural CDE, modulating both the first-order drift and the discrete jump-discontinuities of the latent state $Z_s$. This ensures that the extrapolated trajectory of the path-law proxy remains anchored to a feasible realisation in the Skorokhod space, accounting for structural breaks.

\medskip
\noindent
Complementary to this physical control, the time-evolving Marcus-signature proxy $\hat{\Phi}_{s|t} \in \mathcal{H}_{sig}$ functions as the \textit{structural regulator} for the Anticipatory Neural Jump-Diffusion (ANJD) flow $X_s$. While $\hat{X}$ governs the evolution of the latent coordinates, the moving signature proxy encapsulates the instantaneous higher-order statistical invariants, including non-linear curvature, volatility clusters, and the non-commutative moments of forecasted shocks, characterising the conditional path-measure $\mu^*_s$ at each horizon $s \in [t, t+\tau]$. 

\medskip
\noindent
By minimising the precision-weighted MMD-discrepancy relative to the moving target $\hat{\Phi}_{s|t}$ within the AVNSG geometry, the generative flow is actively coerced into reproducing the expected stochastic texture and jump-intensity of the measure in a sequential, infinitesimal manner. This dualism allows the model to natively incorporate anticipated regime shifts and structural trends into the generative process, bridging the deterministic extrapolation of the latent manifold with the high-fidelity synthesis of a forward-looking ensemble that respects the algebraic constraints of discontinuous path-dynamics.

\section{Theoretical framework: MMD-gradient flows}
\label{sec:mmd_gradient_flows}

In this section, we establish that the generative drift $f_\theta$ and jump intensity $\lambda_\theta$ of the \textit{Anticipatory Neural Jump-Diffusion} (ANJD) process are the driving components that infinitesimally minimise the Maximum Mean Discrepancy (MMD) between the synthetic path-measure $\mu_s$ and the time-evolving latent proxy $\hat{\Phi}_{s|t}$. We frame this as a sequential MMD-gradient flow on the Skorokhod manifold $\mathcal{D}_s$, where the continuous drift $f_\theta$ tracks the expected differential geometry and the jump term $h_\theta \, dN_s$ enables the instantaneous transport of probability mass across structural discontinuities in the signature manifold.

\subsection{The one-step-ahead MMD loss}

We quantify the fidelity of the generative jump-diffusion process by evaluating the discrepancy between the expected signature of the time-extended c\`adl\`ag ensemble and the moving target proxy within the adapted geometry $\mathcal{Q}_s$. This approach treats the generative task as a sequential infinitesimal matching problem rather than a static boundary value problem.

\begin{definition}[One-Step-Ahead AVNSG-MMD]
Let $\mathcal{D}_s = \mathcal{D}([t, s], \mathbb{R}^d)$ be the Skorokhod space of c\`adl\`ag functions restricted to the interval $[t, s]$. Let $\mu_s \in \mathcal{P}(\mathcal{D}_s)$ be the probability law of the generated path $X_s$ at time $s$, and let $\hat{\Phi}_{s|t} \in \mathcal{H}_{sig}$ be the time-evolving target path-law proxy.
The \textit{One-Step-Ahead MMD Loss} is defined as the infinitesimal discrepancy:
\begin{equation}
    \mathcal{J}(\mu_s) := \frac{1}{2} \left\| \hat{\Phi}_{s|t} - \mathbb{E}_{\mu_s}[S(\tilde{X}_s)] \right\|_{\mathcal{Q}_s}^2
\end{equation}
where $\mathcal{Q}_s$ is the anticipatory precision operator derived from the time-augmented LRC, and $\tilde{X}_s = (s, X_s)$ is the time-extended path. The signature $S(\tilde{X}_s)$ is rigorously defined in the sense of Marcus, ensuring that discrete spatial jumps $\Delta X_s$ are canonically embedded into the tensor algebra $\mathcal{H}_{sig}$ via the exponential map $\exp(0, \Delta X_s)$ while the temporal coordinate $s$ remains continuous.
\end{definition}

\medskip
\noindent
Following Cuchiero et al. \cite{CuchieroEtAl25}, the use of the MMD objective in the signature RKHS is rigorously justified for c\`adl\`ag processes. By targeting the moving proxy $\hat{\Phi}_{s|t}$, the generative flow aims to satisfy the differential relation $d\mathbb{E}_{\mu_s}[S(\tilde{X}_s)] \approx \nabla_s \hat{\Phi}_{s|t} \, ds$. Since the time-extended signature is a universal and characteristic feature for the law of jump-diffusions, the Bochner integral $\mathbb{E}_{\mu_s}[S(\tilde{X}_s)]$ acts as a complete descriptor of the measure $\mu_s$. Consequently, the minimisation of $\mathcal{J}(\mu_s)$ at each instant $s$ is equivalent to the direct transport of the path-measure along the anticipated infinitesimal flow of the latent law on the Skorokhod manifold.

\subsection{The drift and intensity as a steepest descent in $\mathcal{H}_{sig}$}

We now show that the evolution of the time-extended c\`adl\`ag path-measure $\mu_s$ under the time-augmented Jump-SDE can be interpreted as a constrained gradient flow in the Wasserstein-type manifold of jump-diffusions.

\begin{theorem}[Dual Minimisation of the MMD-Flow]
\label{thm:mmd_minimisation}
Let the generative drift $f_\theta$ and the jump intensity $\lambda_\theta$ be functionally coupled to the clock $s$ and the signature residual $\Psi_s = \mathcal{Q}_s(\hat{\Phi}_{s|t} - \mathbb{E}_{\mu_s}[S(\tilde{X}_s)])$. Under the assumption that the time-extended signature kernel is Lipschitz continuous on $\mathcal{D}$, the components $\{f_\theta, \lambda_\theta\}$ constitute the steepest descent direction for the functional $\mathcal{J}(\mu_s)$. Specifically, the infinitesimal change in the loss satisfies:
\begin{equation}
    \frac{d}{ds} \mathcal{J}(\mu_s) = - \mathbb{E}_{\mu_s} \left[ \left\| \nabla_x \langle \Psi_s, S(\tilde{X}_s) \rangle \right\|^2 \right] - \mathbb{E}_{\mu_s} \left[ \lambda_\theta \cdot \mathcal{G}(s, h_\theta, \Psi_s) \right] + \mathcal{R}(g_\theta)
\end{equation}
where $\mathcal{G}(s, h_\theta, \Psi_s) = \langle \Psi_s, S(\tilde{X}_s + (0, h_\theta)) - S(\tilde{X}_{s-}) \rangle$ represents the discrete reduction in MMD discrepancy achieved by the jump mechanism in the time-extended space, and $\mathcal{R}(g_\theta)$ is the diffusive entropy-driven residual. 
\end{theorem}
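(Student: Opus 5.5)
The approach is to differentiate $\mathcal{J}(\mu_s)$ along the flow using the chain rule in $\mathcal{H}_{sig}$, and then to evaluate the time derivative of the expected signature with the infinitesimal generator of the jump-SDE in Eq. (\ref{eq:anticipatory_generative_flow}). Write $m_s := \mathbb{E}_{\mu_s}[S(\tilde{X}_s)]$ and $e_s := \hat{\Phi}_{s|t} - m_s$, so that $\Psi_s = \mathcal{Q}_s e_s$. Differentiating the quadratic form gives
\begin{equation*}
\frac{d}{ds}\mathcal{J}(\mu_s) = \langle \Psi_s, \partial_s \hat{\Phi}_{s|t}\rangle - \langle \Psi_s, \partial_s m_s\rangle + \tfrac12 \langle e_s, (\partial_s \mathcal{Q}_s) e_s\rangle .
\end{equation*}
The first and last terms do not depend on the controls $(f_\theta,\lambda_\theta)$. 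I will place them, together with the purely diffusive contribution below, into the residual $\mathcal{R}(g_\theta)$; equivalently, I will read the statement as an identity modulo control-independent terms. To handle $\partial_s\mathcal{Q}_s$, I will use the identity $\partial_s\mathcal{Q}_s = -\mathcal{Q}_s(\partial_s\Omega_s)\mathcal{Q}_s$, which follows from the regularised inverse, together with $\|\mathcal{Q}_s\|\le \lambda^{-1}$, so that this term is bounded.

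The key step is to compute $\partial_s m_s$. I will apply the Marcus It\^o formula to the scalar process $\phi_s(x) := \langle \Psi_s, S(\tilde{x})\rangle$, treating $\Psi_s$ as frozen (it is deterministic given $\mu_s$), and then take expectations; the martingale parts vanish under the Lipschitz and linear growth hypotheses. This produces three pieces. The first is a drift piece $\mathbb{E}[\langle \nabla_x\phi_s, f_\theta\rangle + \partial_s\phi_s]$, where the $\partial_s$ term comes from the time coordinate of $\tilde{X}$. The second is a diffusive piece $\tfrac12\mathbb{E}[\mathrm{tr}(g_\theta g_\theta^\top \nabla_x^2\phi_s)]$ together with the Stratonovich-type correction generated by the Marcus integral. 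The third is a jump piece $\mathbb{E}[\lambda_\theta(\phi_s(\tilde{X}_{s-}+(0,h_\theta)) - \phi_s(\tilde{X}_{s-}))] = \mathbb{E}[\lambda_\theta\,\mathcal{G}(s,h_\theta,\Psi_s)]$. In the jump piece I will use the Marcus convention that a jump enters the signature through the factor $\exp(0,\Delta X_s)$, so that $S$ of the jumped path equals $S(\tilde{X}_{s-})\otimes\exp(0,h_\theta)$.

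Substituting these pieces back, the controllable part of $\frac{d}{ds}\mathcal{J}$ is $-\mathbb{E}[\langle\nabla_x\phi_s, f_\theta\rangle] - \mathbb{E}[\lambda_\theta\mathcal{G}]$. This expression is linear in $f_\theta$. Minimising it over drifts with a fixed $L^2(\mu_s)$ budget, by Cauchy--Schwarz in $L^2(\mu_s)$, gives the steepest-descent choice $f_\theta = \nabla_x\langle\Psi_s,S(\tilde{X}_s)\rangle$. With this choice the drift term equals exactly $-\mathbb{E}\|\nabla_x\langle\Psi_s,S\rangle\|^2$, as in the statement, and the jump term has the stated form. For the intensity, a similar argument shows that $\lambda_\theta$ should be supported where $\mathcal{G}>0$, so that this term is nonpositive.

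I expect the main obstacle to be making the It\^o step rigorous. This requires Fr\'echet differentiability of $x\mapsto S(\tilde{x})$ in the terminal point, with a sensible meaning for $\nabla_x$ of a path functional, for instance a Dupire-type vertical derivative. It also requires integrability of $\nabla_x\phi_s$ and $\nabla_x^2\phi_s$ under $\mu_s$. The assumed Lipschitz continuity of the signature kernel only gives first-order control. I would therefore first work with a truncated signature, where $S$ is polynomial in the increments and all derivatives are explicit, and then pass to the limit using factorial decay of the higher signature levels. The remaining diffusive and $\mathcal{Q}_s$-variation terms are then simply collected into $\mathcal{R}(g_\theta)$.
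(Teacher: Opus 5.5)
Your proposal follows essentially the same route as the paper's proof. You differentiate $\mathcal{J}(\mu_s)$, evaluate $\frac{d}{ds}\mathbb{E}_{\mu_s}[S(\tilde{X}_s)]$ through the jump-diffusion generator paired with the frozen residual $\Psi_s$, set $f_\theta = \nabla_x \langle \Psi_s, S(\tilde{X}_s) \rangle$ to get the squared-gradient term, read off $\mathcal{G}$ from the jump part, and collect the clock and second-order terms into $\mathcal{R}(g_\theta)$. The differences are only bookkeeping: the paper removes the $\partial_s \hat{\Phi}_{s|t}$ and $\partial_s \mathcal{Q}_s$ contributions by assuming local stationarity of the target and precision operator, where you absorb them into $\mathcal{R}(g_\theta)$, and it uses the It\^o generator without the Marcus correction, so your residual is more general but no longer purely diffusive.
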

See proof in Appendix (\ref{app:proof_mmd_minimisation}). \\

\subsection{Convergence and stability under metric expansion}

The stability of the generative flow is contingent upon the regularity of the precision operator $\mathcal{Q}_s$ and the boundedness of the jump-diffusion parameters.

\begin{proposition}[Stability under Spectral Stretching and Jump-Discontinuity]
\label{pro:stability_under_spectral_stretching}
Suppose the forecasted geometry $\mathcal{Q}_s$ undergoes a local expansion, defined by an increase in the spectral radius of the precision operator $\Omega_s$. The ANJD-gradient flow remains contractive in the Skorokhod topology if the rate of expansion $\partial_s \lambda_{max}(\Omega_s)$ is bounded relative to the joint Lipschitz constant of the drift $f_\theta$ and the intensity $\lambda_\theta$. Specifically, the AVNSG normalisation ensures that even under anticipated regime shifts, the jump-driven mass displacement remains dissipative. The stability is preserved provided that the jump-induced energy $\mathbb{E}_{\mu_s}[\lambda_\theta \|h_\theta\|^2_{\mathcal{Q}_s}]$ does not exceed the infinitesimal dissipation rate of the MMD-gradient, thereby preventing explosive sample-path trajectories during forecasted aleatoric shocks.
\end{proposition}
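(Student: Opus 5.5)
The plan is to prove contractivity by building a Lyapunov functional from the AVNSG-MMD loss $\mathcal{J}(\mu_s)$ and differentiating it in $s$ with the time-dependent geometry taken into account. This yields a Gr\"onwall-type inequality. Contractivity then follows once the dissipation from Theorem~\ref{thm:mmd_minimisation} dominates two perturbations: the metric-expansion term and the jump-energy term.

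Write $e_s := \hat{\Phi}_{s|t} - \mathbb{E}_{\mu_s}[S(\tilde{X}_s)]$, so that $\mathcal{J}(\mu_s) = \frac12\langle e_s, \mathcal{Q}_s e_s\rangle$. Differentiating gives
\begin{equation*}
\frac{d}{ds}\mathcal{J}(\mu_s) = \langle \Psi_s, \dot e_s\rangle + \tfrac12 \langle e_s, \partial_s \mathcal{Q}_s\, e_s\rangle .
\end{equation*}
The first term is exactly the quantity computed in Theorem~\ref{thm:mmd_minimisation} at fixed geometry. I will take that identity as given and split it into three pieces:
\begin{itemize}
\item the drift dissipation $-D_s := -\mathbb{E}_{\mu_s}\big[\|\nabla_x\langle\Psi_s,S(\tilde{X}_s)\rangle\|^2\big]$;
\item the jump term $-\mathbb{E}_{\mu_s}[\lambda_\theta \mathcal{G}]$;
\item the diffusive residual $\mathcal{R}(g_\theta)$.
\end{itemize}

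For the second term I will use $\partial_s\mathcal{Q}_s = -\mathcal{Q}_s(\partial_s\Omega_s)\mathcal{Q}_s$. This gives
\begin{equation*}
\tfrac12\langle e_s,\partial_s\mathcal{Q}_s e_s\rangle \le \tfrac12\,\frac{\|\partial_s\Omega_s\|}{\lambda}\,\|e_s\|^2_{\mathcal{Q}_s}\le \frac{\partial_s\lambda_{max}(\Omega_s)}{\lambda}\,\mathcal{J}(\mu_s),
\end{equation*}
where I identify $\|\partial_s\Omega_s\|$ with the expansion rate $\partial_s \lambda_{max}(\Omega_s)$ (a step I will flag as requiring a monotonicity/commutation assumption). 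This is the only place the regularisation $\lambda$ enters, and it is what makes the AVNSG normalisation a stabilising factor.

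\textbf{Controlling the jump term.} The increment $\mathcal{G}$ is a difference of signatures. I will Taylor expand the Marcus signature, using $S(\tilde X_{s-})\otimes\exp(0,h_\theta)$, and invoke the assumed Lipschitz continuity of the signature kernel. Together with Cauchy--Schwarz in the $\mathcal{Q}_s$-geometry, this should give
\begin{equation*}
|\mathbb{E}[\lambda_\theta\mathcal{G}]| \le \mathbb{E}[\lambda_\theta \langle \nabla\langle\Psi_s,S\rangle, h_\theta\rangle] + C\,\mathbb{E}[\lambda_\theta\|h_\theta\|^2_{\mathcal{Q}_s}].
\end{equation*}
The linear part is absorbed into $D_s$ by Young's inequality. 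The quadratic part is exactly the jump-induced energy in the statement. The residual $\mathcal{R}(g_\theta)$ is bounded similarly, through the second-order (It\^o/Marcus correction) term, by $C\|g_\theta\|^2$.

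\textbf{The main obstacle.} The hard step is a lower bound $D_s \ge \kappa_s\,\mathcal{J}(\mu_s)$, a Polyak--\L{}ojasiewicz-type coercivity in which $\kappa_s$ depends on the joint Lipschitz constant of $(f_\theta,\lambda_\theta)$. It is not automatic in infinite dimensions. My first attempt will use the steepest-descent coupling of $f_\theta$ to $\Psi_s$ together with the characteristic property in Proposition~\ref{pro:injectivity_characteristic_property}: on the range of the truncated signature, where $\mathcal{Q}_s$ is bounded below by $(\lambda_{max}(\Omega_s)+\lambda)^{-1}$, the gradient vanishes only at $e_s=0$.

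Assuming this bound, the pieces combine into
\begin{equation*}
\frac{d}{ds}\mathcal{J} \le -\Big(\kappa_s - \frac{\partial_s\lambda_{max}(\Omega_s)}{\lambda}\Big)\mathcal{J} + C\,\mathbb{E}_{\mu_s}[\lambda_\theta\|h_\theta\|^2_{\mathcal{Q}_s}] - \tfrac12 D_s .
\end{equation*}
Under the two stated hypotheses, namely a bounded expansion rate and jump energy below the dissipation rate, Gr\"onwall's inequality gives exponential decay of $\mathcal{J}$. The final step transfers this decay to contractivity in the Skorokhod topology. For that I will couple two flows driven by the same noise, apply the same estimate to their difference, and use the Lipschitz signature kernel to compare $J_1$ distances with signature distances.
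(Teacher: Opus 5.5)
\textbf{Verdict: genuine gap.} Your skeleton is the paper's. You take $\mathcal{J}(\mu_s)$ as a Lyapunov functional and split $\frac{d}{ds}\mathcal{J}$ into a geometric-sensitivity term, handled via $\partial_s\mathcal{Q}_s=-\mathcal{Q}_s(\partial_s\Omega_s)\mathcal{Q}_s$, and a flow term imported from Theorem~\ref{thm:mmd_minimisation}, then close with $\dot{\mathcal{J}}\le -K\mathcal{J}$. The one structural difference is the jump term: the paper simply takes $h_\theta$ to be a descent step so that $\mathcal{G}\ge 0$, whereas you Taylor-expand. The step you single out as the crux, $D_s\ge\kappa_s\mathcal{J}$, is never proved in the paper either; it just writes $\dot{\mathcal{J}}\le-\lambda_{min}(\mathcal{Q}_s)\|\Delta\Phi_s\|^2-\mathbb{E}_{\mu_s}[\lambda_\theta\mathcal{G}]+\mathcal{R}(g_\theta)$. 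However, your proposed justification cannot work, because the inequality is false in general. Injectivity of $\mu\mapsto\Phi_\mu$ is a global statement and says nothing about which directions the dynamics can move $\Phi_{\mu_s}$ in at a given instant. Reading $\nabla_x S$ off the Marcus equation $dS=S\otimes\circ\,d\tilde{X}$, one has $\nabla_x\langle\Psi_s,S\rangle=(\langle\Psi_s,S\otimes e_k\rangle)_{k=1}^d$. Moreover, the drift, diffusion and jump contributions $S\otimes f_\theta$, $\tfrac12 S\otimes g_\theta g_\theta^{T}$ and $S\otimes(\exp(0,h_\theta)-1)$ are all supported on words ending in a spatial letter. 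So whenever $\Psi_s$ lies in the span of words ending in the time letter $e_0$ (e.g.\ $e_1\otimes e_0$, whose coordinate is $\int_t^s(X^1_u-X^1_t)\,du$), you get $D_s=0$ and $\mathcal{G}=0$, while $\mathcal{J}(\mu_s)=\tfrac12\langle e_s,\Psi_s\rangle>0$. Those components move only through the clock term $\partial_s S$, which sits inside $\mathcal{R}(g_\theta)$ with no sign. Coercivity therefore has to be an explicit hypothesis (as it tacitly is in the paper), or be replaced by a time-integrated dissipation argument.

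Three further steps fail as written.

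(i) The identification you flag in the geometric term, $\|\partial_s\Omega_s\|\le\partial_s\lambda_{max}(\Omega_s)$, goes the wrong way. For a simple top eigenvector $v$, $\partial_s\lambda_{max}=\langle v,(\partial_s\Omega_s)v\rangle\le\|\partial_s\Omega_s\|$, and even with commutation $\|\partial_s\Omega_s\|=\max_j|\partial_s\lambda_j(\Omega_s)|$. It also puts the cost of expansion in the wrong place. If expansion means $\partial_s\Omega_s\succeq 0$, the term is $\le 0$; this is the paper's point that metric expansion is itself dissipative, and the paper needs the same psd condition, not merely $\partial_s\lambda_{max}>0$. Only the negative part of $\partial_s\Omega_s$ can destabilise. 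Growth of $\lambda_{max}(\Omega_s)$ hurts you instead through the coercivity floor $\lambda_{min}(\mathcal{Q}_s)=(\lambda_{max}(\Omega_s)+\lambda)^{-1}$.

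(ii) Your final differential inequality silently drops two terms. The first is $\mathcal{R}(g_\theta)$, which besides the It\^o/Marcus correction contains the first-order clock term $-\mathbb{E}_{\mu_s}\langle\Psi_s,\partial_s S\rangle$; that term is not controlled by $C\|g_\theta\|^2$. The second is the target-velocity term $\langle\Psi_s,\partial_s\hat{\Phi}_{s|t}\rangle$: Theorem~\ref{thm:mmd_minimisation} freezes the target, so $\langle\Psi_s,\dot e_s\rangle$ is not exactly its expression. These terms are linear in $\Psi_s$, hence of order $\sqrt{\mathcal{J}}$ since $\|\Psi_s\|^2\le 2\mathcal{J}/\lambda$, and they dominate $-K\mathcal{J}$ near zero. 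Gr\"onwall then gives only a tracking bound $\sqrt{\mathcal{J}(\mu_s)}\le e^{-K(s-t)/2}\sqrt{\mathcal{J}(\mu_t)}+C\sup_u\|\partial_u\hat{\Phi}_{u|t}\|/(K\sqrt{\lambda})$, plus analogous clock and diffusion contributions. You do not get exponential decay unless you impose an explicit domination hypothesis, as the paper does for $\mathcal{R}(g_\theta)$.

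(iii) For the Skorokhod transfer, Lipschitz continuity of the signature kernel bounds signature distances by $J_1$ distances, which is the opposite of what you need. Also, $\mathcal{J}$ is a functional of laws, so its decay along two synchronously coupled flows yields MMD-closeness of their laws, not pathwise contraction. The paper attempts no such transfer and simply reads ``contractive'' as $\dot{\mathcal{J}}\le-K\mathcal{J}$.
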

See proof in Appendix (\ref{app:proof_stability_under_spectral_stretching}). \\

\section{Generalisation and complexity}
\label{sec:generalisation}

In this section, we derive the statistical guarantees for the \textit{Anticipatory Neural Jump-Diffusion} (ANJD) process. Given that the generative flow operates as a sequential matching problem on the restricted Skorokhod spaces $\mathcal{D}_s = \mathcal{D}([t, s], \mathbb{R}^d)$ for $s \in [t, t+\tau]$, we establish rigorous bounds on the discrepancy between the time-evolving theoretical path-law proxy and its empirical realisation via finite c\`adl\`ag sample paths. We demonstrate that the interplay between the jump-diffusion regularity and the AVNSG precision operator ensures robust convergence of the infinitesimal flow even in the presence of heavy-tailed structural breaks.

\subsection{Generalisation error of the expected signature}

The fidelity of the generative model depends on the capacity of the time-extended c\`adl\`ag ensemble to represent the infinite-dimensional moments of the target measure $\mu_s \in \mathcal{P}(\mathcal{D}_s)$ at any horizon $s \in [t, t+\tau]$. We provide a bound on the generalisation error within the AVNSG-weighted Hilbert space, accounting for the increased variance introduced by discrete structural breaks and the deterministic temporal drift.

\begin{theorem}[Generalisation Bound for Jump-Diffusion Proxies]
\label{thm:generalisation_bound}
Let $\gamma_1, \dots, \gamma_n$ be $n$ independent c\`adl\`ag sample paths drawn from the generated jump-diffusion measure $\mu_s$ on $\mathcal{D}_s$, and let $\hat{\Phi}_{n,s} = \frac{1}{n} \sum_{i=1}^n S(\tilde{\gamma}_i)$ be the empirical expected signature of the time-extended paths $\tilde{\gamma}_{i,u} = (u, \gamma_{i,u})_{u \in [t, s]}$. For any $\delta \in (0, 1)$, with probability at least $1-\delta$, the generalisation error in the $\mathcal{Q}_s$-geometry is bounded by:
\begin{equation}
    \left\| \Phi_{\mu_s} - \hat{\Phi}_{n,s} \right\|_{\mathcal{Q}_s} \leq \frac{2}{n} \mathbb{E} \left[ \left\| \sum_{i=1}^n \sigma_i S(\tilde{\gamma}_i) \right\|_{\mathcal{Q}_s} \right] + R_s \sqrt{\frac{\log(1/\delta)}{2n}}
\end{equation}
where $\sigma_i$ are independent Rademacher variables and $R_s = \sup_{\gamma \in \text{supp}(\mu_s)} \|S(\tilde{\gamma})\|_{\mathcal{Q}_s}$ is the uniform bound of the time-augmented signature map under the whitened geometry at time $s$.
\end{theorem}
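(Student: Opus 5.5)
The plan is the standard symmetrisation and bounded-differences argument for empirical means in a Hilbert space, applied in the whitened geometry. First I fix $s$ and regard $\mathcal{Q}_s$ as a fixed, deterministic bounded positive operator. This is legitimate because $\mathcal{Q}_s=(\Omega_s+\lambda I)^{-1}$ with $\lambda>0$, and $\Omega_s$ is determined by the forecasted geometry rather than by the sample. Then $\langle u,v\rangle_{\mathcal{Q}_s}$ is a genuine inner product on $\mathcal{H}_{sig}$, equivalent to the original one with constants $\|\Omega_s\|+\lambda$ and $\lambda^{-1}$. Hence $(\mathcal{H}_{sig},\|\cdot\|_{\mathcal{Q}_s})$ is a separable Hilbert space $\mathcal{H}_s$. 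The Bochner integral $\Phi_{\mu_s}=\mathbb{E}_{\mu_s}[S(\tilde\gamma)]$ is well defined there, since $R_s<\infty$ gives integrability. By the defining property of the Bochner integral recalled after the Signature Mean Embedding, it commutes with the bounded linear map $\mathcal{Q}_s^{1/2}$.

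Next I introduce the random variable $F(\gamma_1,\dots,\gamma_n)=\|\Phi_{\mu_s}-\hat\Phi_{n,s}\|_{\mathcal{Q}_s}$. Replacing one path $\gamma_i$ by $\gamma_i'$ changes $\hat\Phi_{n,s}$ by $\frac1n(S(\tilde\gamma_i)-S(\tilde\gamma_i'))$. By the triangle inequality, $F$ then changes by at most $2R_s/n$. McDiarmid's inequality therefore gives, with probability at least $1-\delta$,
\[
F\le \mathbb{E}F+\sqrt{\tfrac{4R_s^2}{n^2}\cdot n\cdot\tfrac{\log(1/\delta)}{2}}=\mathbb{E}F+2R_s\sqrt{\tfrac{\log(1/\delta)}{2n}}.
\]
To recover the stated constant $R_s$ instead of $2R_s$, I would centre the features. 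Either I use $\sup_{\gamma,\gamma'}\|S(\tilde\gamma)-S(\tilde\gamma')\|_{\mathcal{Q}_s}\le 2R_s$ together with the sharper Hilbert-space concentration of Pinelis type, or I read $R_s$ as a bound on the diameter of the support in the whitened norm. This constant bookkeeping is the one place I expect friction; I would first try the diameter reading, since it matches the statement exactly.

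For the expectation term I use a ghost sample $\gamma_1',\dots,\gamma_n'$ drawn i.i.d.\ from $\mu_s$, so that $\Phi_{\mu_s}=\mathbb{E}'[\hat\Phi'_{n,s}]$. Jensen's inequality (convexity of the norm, using the Bochner property) gives $\mathbb{E}F\le \frac1n\mathbb{E}\|\sum_i (S(\tilde\gamma_i')-S(\tilde\gamma_i))\|_{\mathcal{Q}_s}$. Each summand is symmetric in distribution, so I may insert Rademacher signs $\sigma_i$ without changing the law. Splitting with the triangle inequality then yields $\mathbb{E}F\le\frac2n\mathbb{E}\|\sum_i\sigma_iS(\tilde\gamma_i)\|_{\mathcal{Q}_s}$, which combined with the concentration step gives the claim.

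Finally, I note why the Marcus/time-extended setting requires no extra work. Proposition \ref{pro:injectivity_characteristic_property} and the Marcus construction only ensure that $S$ is a measurable $\mathcal{H}_{sig}$-valued map on $\mathcal{D}_s$, and boundedness is assumed through $R_s$. The jump structure therefore enters solely through the size of $R_s$, where the whitening by $\mathcal{Q}_s$ controls the contribution of the jump exponentials $\exp(0,\Delta X_u)$. One may additionally remark that the Rademacher term is at most $R_s\sqrt n$ by Jensen and orthogonality of the $\sigma_i$, giving an $O(R_s/\sqrt n)$ rate.
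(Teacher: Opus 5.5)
Your route is the paper's route. You bound $\mathbb{E}F\le\frac{2}{n}\mathbb{E}\|\sum_i\sigma_iS(\tilde\gamma_i)\|_{\mathcal{Q}_s}$ by ghost-sample symmetrisation, Jensen and Rademacher signs, then apply McDiarmid to $F-\mathbb{E}F$ with $\mathcal{Q}_s$ held fixed. The only divergence is the constant, and there your bookkeeping is correct and the paper's is not. The paper's proof asserts $\frac{1}{n}\|S(\tilde\gamma_i)-S(\tilde\gamma_i')\|_{\mathcal{Q}_s}\le R_s/n$, but this does not follow from $R_s=\sup_\gamma\|S(\tilde\gamma)\|_{\mathcal{Q}_s}$. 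Only $2R_s/n$ does, which gives exactly the deviation term $2R_s\sqrt{\log(1/\delta)/(2n)}$ you found. So neither argument proves the inequality with $R_s$ as the statement defines it.

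Of your two repairs, discard the Pinelis one. Write $D_s=\sup_{\gamma,\gamma'\in\mathrm{supp}\,\mu_s}\|S(\tilde\gamma)-S(\tilde\gamma')\|_{\mathcal{Q}_s}$ for the $\mathcal{Q}_s$-diameter and $\rho_s=\sup_\gamma\|S(\tilde\gamma)-\Phi_{\mu_s}\|_{\mathcal{Q}_s}$. Pinelis' inequality gives $\|\hat\Phi_{n,s}-\Phi_{\mu_s}\|_{\mathcal{Q}_s}\le\rho_s\sqrt{2\log(2/\delta)/n}$, i.e.\ coefficient $2\rho_s\ge D_s$, so it is no better.

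More to the point, no argument using only independence and boundedness (all either proof uses) can give a coefficient below $D_s$. Take a two-point law $\tfrac12(\delta_{\gamma^+}+\delta_{\gamma^-})$. Then $F=\tfrac{D_s}{2}|\bar\epsilon_n|$ with $\bar\epsilon_n$ an average of independent signs, so $\sqrt{n}\,F\Rightarrow\tfrac{D_s}{2}|Z|$, while the Rademacher term is at most $2R_s/\sqrt{n}$. Put $\delta=e^{-2u^2}$. Unless the coefficient is at least $D_s$, the Gaussian tail exceeds $\delta$ for large $u$ and large $n$.

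And $D_s>R_s$ is easy to arrange. Let $\mathcal{Q}_s$ be a multiple of the identity (e.g.\ $\Omega_s=0$), and let $\gamma^\pm$ be the straight lines on $[t,s]$ with increments $\pm a$, $a>s-t$. Then $\|S(\tilde\gamma^\pm)\|^2=\sum_k((s-t)^2+a^2)^k/(k!)^2$. On the other hand, $|\langle S(\tilde\gamma^+),S(\tilde\gamma^-)\rangle|=|\sum_k((s-t)^2-a^2)^k/(k!)^2|\le 1$, since it is a value of the Bessel function $J_0$. Hence $D_s\approx\sqrt{2}R_s$ for large $a$.

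McDiarmid with $c_i=D_s/n$ attains exactly $D_s$, so your diameter reading is the sharp, correct form of the result. Present it as a correction of the statement (coefficient $D_s\le 2R_s$), not as a proof of the statement as written. You are also right to treat $R_s<\infty$ as a hypothesis. By the norm equivalence you note, $\mathcal{Q}_s$ cannot make an unbounded signature support bounded, contrary to the justification given in the paper's proof.
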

See proof in Appendix (\ref{app:proof_generalisation_bound}).

\begin{remark}
In the ANJD framework, the term $R_s$ accounts for both the linear growth of the clock $s$ and the exponential growth of the signature during jumps, where $\|S(\tilde{\gamma})\|$ scales with $\exp(\|\Delta \tilde{X}\|)$. However, $R_s$ is explicitly regularised by the time-augmented AVNSG precision operator $\mathcal{Q}_s = (\Omega_s + \lambda I)^{-1}$. By performing asymptotic spectral whitening on the $(d+1)$-dimensional path increments, $\mathcal{Q}_s$ dampens the high-frequency components and heavy-tailed innovations, ensuring that the effective radius $R_s$ remains stable even when the sample paths exhibit extreme kurtosis or black-swan discontinuities at the current horizon $s$.
\end{remark}

\subsection{Rademacher complexity of signature functional classes}

To quantify the expressive power of the \textit{Anticipatory Neural Jump-Diffusion} flows, we analyse the Rademacher complexity of the class of linear functionals on the time-extended signature manifold, specifically accounting for the jump-induced variance and temporal drift within the restricted space $\mathcal{D}_s$.

\begin{proposition}[Complexity of Whitened Jump-Signature Functionals]
\label{pro:complexity_whitened_signature_functionals}
Let $\mathcal{F}_{M,s} = \{ f \in \mathcal{H}_{sig} : \|f\|_{\mathcal{Q}_s} \leq M \}$ be the ball of signature functionals with bounded AVNSG-norm at horizon $s \in [t, t+\tau]$. For a set of c\`adl\`ag sample paths $\{\gamma_i\}_{i=1}^n \in \mathcal{D}_s$, the empirical Rademacher complexity $\widehat{\mathcal{R}}_n(\mathcal{F}_{M,s})$ satisfies:
\begin{equation}
    \widehat{\mathcal{R}}_n(\mathcal{F}_{M,s}) \leq \frac{M}{n} \sqrt{\sum_{i=1}^n \|S(\tilde{\gamma}_i)\|_{\mathcal{Q}_s}^2} = \frac{M}{n} \sqrt{\sum_{i=1}^n \langle S(\tilde{\gamma}_i), \mathcal{Q}_s S(\tilde{\gamma}_i) \rangle_{\mathcal{H}_{sig}}}.
\end{equation}
where $S(\tilde{\gamma}_i)$ is the signature of the time-extended path $\tilde{\gamma}_{i,u} = (u, \gamma_{i,u})_{u \in [t, s]}$. This bound implies that the complexity of the ANJD model is regularised by the spectral alignment between the time-augmented sample signatures and the principal eigenspaces of the moving precision operator $\mathcal{Q}_s$, effectively capping the influence of high-order "black-swan" terms and deterministic temporal growth as the generative flow progresses.
\end{proposition}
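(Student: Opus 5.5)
The bound is the standard Rademacher estimate for a norm ball in a Hilbert space. The only new ingredient is that the Hilbert structure is the AVNSG inner product $\langle \cdot,\cdot\rangle_{\mathcal{Q}_s}$ rather than the ambient one. The plan has four steps.

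\textbf{Step 1: pair the functionals with the paths.} I will take the functionals in $\mathcal{F}_{M,s}$ to act on a path through the $\mathcal{Q}_s$-inner product, $\gamma \mapsto \langle f, S(\tilde{\gamma})\rangle_{\mathcal{Q}_s}$. Since $\mathcal{Q}_s = (\Omega_s + \lambda I)^{-1}$ is bounded, self-adjoint and strictly positive, $\langle u,v\rangle_{\mathcal{Q}_s}$ is a genuine inner product equivalent to the $\mathcal{H}_{sig}$ one. The empirical complexity is then
\[
\widehat{\mathcal{R}}_n(\mathcal{F}_{M,s}) = \mathbb{E}_\sigma \sup_{\|f\|_{\mathcal{Q}_s}\le M} \frac{1}{n}\sum_{i=1}^n \sigma_i \langle f, S(\tilde{\gamma}_i)\rangle_{\mathcal{Q}_s}.
\]
The time-extended Marcus signatures $S(\tilde{\gamma}_i)$ are fixed elements of $\mathcal{H}_{sig}$ for the given sample. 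I assume they have finite norm, which holds for c\`adl\`ag paths of finite $p$-variation on $[t,s]$ (the Lévy rough path setting cited earlier).

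\textbf{Step 2: evaluate the supremum.} By linearity, the sum equals $\langle f, \sum_i \sigma_i S(\tilde{\gamma}_i)\rangle_{\mathcal{Q}_s}$. Cauchy--Schwarz in the $\mathcal{Q}_s$-geometry, which attains equality on the ball, gives the supremum exactly as
\[
\frac{M}{n}\Big\|\sum_i \sigma_i S(\tilde{\gamma}_i)\Big\|_{\mathcal{Q}_s}.
\]

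\textbf{Step 3: Jensen and orthogonality.} Jensen's inequality for the concave square root gives
\[
\mathbb{E}_\sigma\|\cdot\|_{\mathcal{Q}_s} \le \Big(\mathbb{E}_\sigma\Big\|\sum_i \sigma_i S(\tilde{\gamma}_i)\Big\|_{\mathcal{Q}_s}^2\Big)^{1/2}.
\]
Expanding the square, the cross terms $\mathbb{E}[\sigma_i\sigma_j]\langle S(\tilde{\gamma}_i), S(\tilde{\gamma}_j)\rangle_{\mathcal{Q}_s}$ vanish for $i\neq j$ and $\sigma_i^2=1$. This leaves $\sum_i \|S(\tilde{\gamma}_i)\|_{\mathcal{Q}_s}^2$, which is the claimed inequality.

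\textbf{Step 4: the kernel form.} The stated equality is just the definition $\|u\|_{\mathcal{Q}_s}^2 = \langle u, \mathcal{Q}_s u\rangle_{\mathcal{H}_{sig}}$.

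For the interpretive remark, I would diagonalise $\Omega_s$ (compact and self-adjoint, as a covariance operator) with eigenpairs $(\omega_k, e_k)$. Then
\[
\|S(\tilde\gamma_i)\|_{\mathcal{Q}_s}^2 = \sum_k \frac{\langle S(\tilde\gamma_i), e_k\rangle^2}{\omega_k+\lambda}.
\]
This shows that the directions of large variance, including jump-driven high-order terms and temporal growth, are damped, and it yields the crude bound $\|S\|_{\mathcal{Q}_s}^2 \le \lambda^{-1}\|S\|^2_{\mathcal{H}_{sig}}$.

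\textbf{Main obstacle.} I expect the only delicate point to be well-posedness rather than the inequality itself. One must check that the supremum over the $\mathcal{Q}_s$-ball is measurable and finite, which needs $S(\tilde{\gamma}_i)\in\mathcal{H}_{sig}$ with finite norm for c\`adl\`ag paths. This is where the Marcus interpretation, through jumps embedded via $\exp(0,\Delta X)$, and the finite-variation assumption on $[t,s]$ are used. One must also fix the pairing convention: if functionals act via the $\mathcal{H}_{sig}$ pairing instead, the bound would involve $\mathcal{Q}_s^{-1}$ rather than $\mathcal{Q}_s$. I would settle this by stating the $\mathcal{Q}_s$-Riesz pairing explicitly at the outset.
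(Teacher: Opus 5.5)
Your proposal is correct and follows essentially the same route as the paper. Both define the empirical Rademacher complexity through the $\mathcal{Q}_s$-pairing $\langle f, S(\tilde{\gamma}_i)\rangle_{\mathcal{Q}_s}$, evaluate the supremum over the $\mathcal{Q}_s$-ball exactly as $\frac{M}{n}\bigl\|\sum_i \sigma_i S(\tilde{\gamma}_i)\bigr\|_{\mathcal{Q}_s}$, and then conclude via Jensen and $\mathbb{E}[\sigma_i\sigma_j]=\delta_{ij}$. Your explicit statement of the pairing convention, including the observation that the $\mathcal{H}_{sig}$ pairing would yield $\mathcal{Q}_s^{-1}$ instead, is a useful clarification that the paper leaves implicit.
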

See proof in Appendix (\ref{app:proof_complexity_whitened_signature_functionals}). \\

\subsection{Nystr\"om-compressed error propagation}

In practice, the ANJD generative flow is implemented via a supervised Nystr\"om approximation to handle the high-dimensional signature manifold. We characterise the error introduced by this finite-dimensional projection, specifically focusing on its stability under the sequential evolution of the jump-diffusion process and the $s$-dependent spectral geometry.

\begin{lemma}[Projection Error Stability for ANJD]
\label{lem:projection_error_stability}
Let $P_{m,s}: \mathcal{H}_{sig} \to \mathcal{V}_{m,s}$ be the Nystr\"om projection onto an $m$-dimensional subspace aligned with the principal eigenspaces of $\mathcal{Q}_s$ at time $s$. The error in the joint MMD-gradient flow induced by the projection, $\epsilon_{proj, s} = \| \nabla \mathcal{J}(\mu_s) - \nabla \mathcal{J}(P_{m,s} \mu_s) \|$, is bounded by the spectral tail of the time-evolving LRC operator:
\begin{equation}
    \epsilon_{proj, s} \leq C_{f, \lambda, s} \cdot \left( \sum_{j=m+1}^\infty \lambda_j(\Omega_s) \right)^{1/2}
\end{equation}
where $C_{f, \lambda, s}$ is a constant depending on the joint Lipschitz regularity of the generative drift $f_\theta$ and the jump intensity $\lambda_\theta$ relative to the moving target $\hat{\Phi}_{s|t}$. Consequently, the generative fidelity is preserved if the Nystr\"om basis is dynamically updated to track the dominant modes of the anticipated spectral geometry, including the high-rank signature components activated by structural discontinuities.
\end{lemma}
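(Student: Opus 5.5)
The approach is to express the gradient of $\mathcal{J}$ through the residual $\Psi_s$, and then reduce the projection error to the norm of a projected-out component in the whitened geometry.

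\textbf{Step 1: write the gradient in dual form.} Setting $m_s := \mathbb{E}_{\mu_s}[S(\tilde{X}_s)]$, the first variation of $\mathcal{J}$ along the ANJD flow reads
\[
\delta\mathcal{J}(\mu_s) = -\big\langle \Psi_s, \delta m_s \big\rangle_{\mathcal{H}_{sig}}, \qquad \Psi_s = \mathcal{Q}_s(\hat{\Phi}_{s|t} - m_s).
\]
This is the same computation that gives the dissipation structure in Theorem~\ref{thm:mmd_minimisation}. The drift part of $\nabla\mathcal{J}$ is therefore $x \mapsto \nabla_x\langle \Psi_s, S(\tilde{x})\rangle$. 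The jump part is $\lambda_\theta\,\mathcal{G}(s,h_\theta,\Psi_s)$.

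\textbf{Step 2: define the projected objects.} Replacing $\mu_s$ by $P_{m,s}\mu_s$ means replacing $m_s$ by $P_{m,s}m_s$ and the feature map $S$ by $P_{m,s}S$. The projected residual is then $\Psi^{m}_s = \mathcal{Q}_s(\hat{\Phi}_{s|t} - P_{m,s}m_s)$.

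\textbf{Step 3: split the error into two terms.} The difference of gradients decomposes as follows:
\begin{enumerate}
\item a residual-mismatch term $\Psi_s - \Psi_s^m = -\mathcal{Q}_s(I-P_{m,s})m_s$;
\item a feature-truncation term $\langle \Psi_s^m, (I-P_{m,s})S(\tilde{x})\rangle$, which enters both through its spatial gradient and through its jump increment.
\end{enumerate}

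\textbf{Step 4: convert the drift and jump pieces into $\mathcal{Q}_s$-norms.} I will use the assumed Lipschitz continuity of the signature kernel (as in Theorem~\ref{thm:mmd_minimisation}) and the Lipschitz and boundedness assumptions on $f_\theta$, $\lambda_\theta$ and $h_\theta$. With these, the spatial gradients and the jump increments $S(\tilde{X}_s + (0,h_\theta)) - S(\tilde{X}_{s-})$ are controlled in norm by constants times $\|h_\theta\|$. Each piece is then bounded by $C_{f,\lambda,s}$ times $\|(I-P_{m,s})\,\cdot\,\|$ in the relevant geometry, where the constant absorbs $\mathrm{Lip}(f_\theta)$, $\sup\lambda_\theta$, $\sup\|h_\theta\|$, $\|\hat{\Phi}_{s|t}\|$ and $\|\mathcal{Q}_s\|_{op}\le \lambda^{-1}$.

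\textbf{Step 5 (main obstacle): bound the projected-out energy by the spectral tail of $\Omega_s$.} The quantities left after Step 4 are of the form $\mathbb{E}_{\mu_s}\|(I-P_{m,s})\,\xi\|^2$, where $\xi$ is the centred signature feature, or its increment, whose covariance is $\Omega_s$.
\begin{itemize}
\item I will model $P_{m,s}$ as the orthogonal projection onto the top-$m$ eigenspace of $\Omega_s$. This is the same as the bottom-$m$ eigenspace of $\mathcal{Q}_s$, since the two operators share eigenvectors.
\item With that choice, $\mathbb{E}\|(I-P_{m,s})\xi\|^2 = \mathrm{tr}\big((I-P_{m,s})\Omega_s\big) = \sum_{j>m}\lambda_j(\Omega_s)$.
\item The mean component $\|(I-P_{m,s})m_s\|$ is handled by Jensen, $\|(I-P_{m,s})m_s\|^2 \le \mathbb{E}\|(I-P_{m,s})S\|^2$, together with an identification of $\Omega_s$ as a (long-run) second-moment operator of the features. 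If only a centred covariance is available, this adds a bias term that I would absorb into the constant.
\item Taking square roots via Cauchy--Schwarz gives the exponent $1/2$.
\end{itemize}
The delicate points are exactly here. First, the Nystr\"om projection is only approximately the spectral projection, so I would invoke the standard Nystr\"om bound $\|(I-P_{m,s})\Omega_s^{1/2}\|_{HS}^2 \lesssim \sum_{j>m}\lambda_j$, under the ``aligned'' hypothesis in the statement. Second, $\Omega_s$ is a long-run covariance of increments rather than a covariance of the full signature, so I would assume these are comparable up to a constant, and that constant is again folded into $C_{f,\lambda,s}$.
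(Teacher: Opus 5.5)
\textbf{There is a genuine gap.} The trace identity in your Step 5 controls only fluctuations, $\mathbb{E}_{\mu_s}\|(I-P_{m,s})(S-m_s)\|^2$. Steps 3--4, however, produce two pieces that are not of this form, and the devices you use to bridge them do not work.

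\emph{The mean.} For any centred covariance $\Sigma_s$ of the features, $\mathbb{E}_{\mu_s}\|(I-P_{m,s})S\|^2=\mathrm{tr}\bigl((I-P_{m,s})\Sigma_s\bigr)+\|(I-P_{m,s})m_s\|^2$. The paper defines $\Omega_s$ as a centred covariance, and moreover one of increments. In that centred case your Jensen step is circular: the ``bias term'' you plan to absorb is exactly $\|(I-P_{m,s})m_s\|^2$, the quantity you are trying to bound. It is additive and unrelated to the spectral tail, so no multiplicative constant can absorb it. For an ensemble concentrated within $\varepsilon$ of a single path $\gamma_0$, the tail is $O(\varepsilon^2)$. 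Yet $(I-P_{m,s})m_s\approx(I-P_{m,s})S(\tilde{\gamma}_0)$ is generically of order one, and so is the drift piece $\nabla_x\langle\mathcal{Q}_s(I-P_{m,s})m_s,S(\tilde{x})\rangle$.

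\emph{The feature truncation.} Even if you grant the uncentred second-moment identification, the drift part of your second term is out of reach. Since $P_{m,s}$ commutes with $\mathcal{Q}_s$, one has $\langle\Psi_s^m,(I-P_{m,s})S(\tilde{x})\rangle=\langle\mathcal{Q}_s(I-P_{m,s})\hat{\Phi}_{s|t},S(\tilde{x})\rangle$. This measures how much of the target $\hat{\Phi}_{s|t}=\mathcal{P}_\theta(Z_s)$ lies outside $\mathcal{V}_{m,s}$. If you keep the projector on the features instead, you need control of $(I-P_{m,s})\nabla_x S$. Neither object is a feature or increment whose covariance is $\Omega_s$. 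The Lipschitz assumptions on the kernel, $f_\theta$, $\lambda_\theta$ and $h_\theta$ make such terms bounded, not small in the tail.

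What is missing is a hypothesis tying first-order objects (the target and the mean) to the spectrum of $\Omega_s$. The paper's proof never splits the error. It identifies $\epsilon_{proj,s}$ with $\|(I-P_{m,s})\Psi_s\|$ and diagonalises to get $\epsilon_{proj,s}^2=\sum_{j>m}(\lambda_{j,s}+\lambda)^{-2}\langle\Delta\Phi_s,e_{j,s}\rangle^2$. It then bounds the coefficients $\langle\Delta\Phi_s,e_{j,s}\rangle^2$ by the ``spectral energy'', i.e.\ effectively by a constant times $\lambda_j(\Omega_s)$.

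That is a source condition on $\Delta\Phi_s=\hat{\Phi}_{s|t}-\Phi_{\mu_s}$. The paper attributes it to Riesz representation and Lipschitz regularity, neither of which implies it, so it really acts as an assumption. The concentrated-ensemble example shows that some condition of this kind is needed.

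If you state it explicitly, the argument closes. Suitable forms are $\langle\Delta\Phi_s,e_{j,s}\rangle^2\le C^2\lambda_j(\Omega_s)$, or $\Delta\Phi_s=\Omega_s^{1/2}w_s$ with $\|w_s\|\le C$, which gives $\|(I-P_{m,s})\Delta\Phi_s\|^2\le C^2\lambda_{m+1}(\Omega_s)$. Then $(\lambda_{j,s}+\lambda)^{-1}\le\lambda^{-1}$ yields the bound in one line with constant $C/\lambda$, and your Steps 3--5 are not needed. Without such a hypothesis, neither your decomposition nor the trace identity produces an estimate of the stated form.
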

See proof in Appendix (\ref{app:proof_projection_error_stability}). \\

\section{Implementation: Generative VJF-kernel}
\label{sec:implementation}

The practical realisation of the ANJD framework requires the translation of the infinite-dimensional gradient flow on the restricted Skorokhod manifolds $\mathcal{D}_s$ into a finite-dimensional jump-diffusion sampling scheme. We achieve this by approximating the joint MMD-gradient relative to the moving target proxy $\hat{\Phi}_{s|t}$ through a Nystr\"om-compressed signature basis and integrating the resulting path-dynamics via a hybrid Euler-Maruyama-Marcus (EMM) scheme. This sequential matching ensures that the synthesised paths maintain the structural properties of c\`adl\`ag processes while remaining contractive toward the anticipated latent geometry as it evolves across the forecast horizon.

\subsection{Joint score-matching on jump-signature manifolds}

To bypass the intractable partition function of the c\`adl\`ag path-measure $\mu^*_s$, we learn the joint score function representing both the continuous flow and the discrete jump intensity by aligning the infinitesimal generator of the process with the velocity of the moving target proxy.

\begin{definition}[Jump-Signature Score Function]
The joint score $\Psi(s, X_s, \hat{\Phi}_{s|t}) = (\psi_f, \psi_\lambda)$ is defined as the gradient of the log-density in the Skorokhod manifold $\mathcal{D}_s$. Under the ANJD framework, the score is approximated by the precision-weighted infinitesimal residual between the target proxy and the current path-state, where the target's evolution is governed by the latent Jacobian. We define:
\begin{equation}
    \Psi(s, X_s, \hat{\Phi}_{s|t}) \approx \hat{\mathbf{Q}}_s \left( \hat{\Phi}_{s|t} - S(\tilde{X}_s) \right) \in \mathcal{H}_{sig},
\end{equation}
where $\tilde{X}_s = (s, X_s)$ is the time-augmented state. The continuous score $\psi_f$ drives the drift $f_\theta$ to match the target velocity $\nabla_s \hat{\Phi}_{s|t} = \nabla_{Z_s} \mathcal{P}_\theta \cdot F_\theta(Z_s, \hat{\Phi}_{s|t}) \frac{d\hat{X}_s}{ds}$ via the spatial gradient $\nabla_x \langle \Psi, S(\tilde{X}_s) \rangle$, while the jump score $\psi_\lambda$ modulates the intensity $\lambda_\theta$ through the inner product with the jump-increment operator in the augmented tensor space. In the $m$-dimensional Nystr\"om subspace, the time-dependent precision matrix $\hat{\mathbf{Q}}_s$ regularises the joint score, ensuring that the jump-diffusion dynamics are dominated by the principal modes of the anticipated spectral geometry. This explicit coupling to the Jacobian of the embedding $\mathcal{P}_\theta$ allows the score to capture the non-autonomous nature of the flow, forcing the generative dynamics to track the differential manifold evolution of the latent state $Z_s$ as it navigates time-varying regime shifts.
\end{definition}

\subsection{Anticipatory Euler-Maruyama-Marcus integration}

Sampling from the c\`adl\`ag path-law is performed via a hybrid jump-diffusion flow that sequentially tracks the moving target proxy (or filtered proxy). We define the discrete-time update for the synthetic ensemble $X_s^{(i)}$, explicitly incorporating the clock $s$ into the state vector to satisfy the time-extension requirement for signature universality on $\mathcal{D}_s$.

\begin{algorithm}[H]
\caption{Flexible Anticipatory Jump-Diffusion Sampling (ANJD)}
Given a filtered state $Z_t$, horizon $\tau$, step size $h$, and temporal mode $\text{mode} \in \{\text{Forecast}, \text{Reconstruction}\}$:
\begin{enumerate}
    \item \textbf{Initialise:} 
    \begin{itemize}
        \item If $\text{mode} = \text{Forecast}$: Set $t_{start} = t$, $t_{end} = t + \tau$, and $X_{t_{start}}^{(i)} = X_t$.
        \item If $\text{mode} = \text{Reconstruction}$: Set $t_{start} = t - \tau$, $t_{end} = t$, and $X_{t_{start}}^{(i)} = X_{t-\tau}$.
    \end{itemize}
    Set the initial clock $s = t_{start}$ and sample $z_0^{(i)} \sim \mathcal{N}(0, I)$ for $i=1, \dots, N$.
    \item \textbf{Sequential Evaluation:} Evaluate the time-evolving path-law proxy $\hat{\Phi}_{s|t}$ (or filtered proxy $\Phi_{s|\mathcal{A}_s}$) and update the time-extended precision operator $\mathcal{Q}_s$ via the $O(m^2)$ Nystr\"om innovation update.
    \item \textbf{Jump Logic:} Sample a Poisson increment $\Delta N_s^{(i)} \sim \text{Poisson}(\lambda_\theta(s, X_s^{(i)}, \hat{\Phi}_{s|t}) h)$, where the intensity is conditioned on the instantaneous signature discrepancy.
    \item \textbf{Update Step (EMM):}
    \begin{equation}
        X_{s+h}^{(i)} = X_s^{(i)} + \underbrace{f_\theta(s, X_s^{(i)}, \hat{\Phi}_{s|t}) h}_{\text{Continuous Drift}} + \underbrace{g_\theta \Delta W_s^{(i)}}_{\text{Diffusion}} + \underbrace{h_\theta(s, X_s^{(i)}, \hat{\Phi}_{s|t}) \Delta N_s^{(i)}}_{\text{Marcus Jump}}
    \end{equation}
    where $f_\theta$ is the MMD-steepest descent velocity tracking the moving target velocity $\nabla_s \hat{\Phi}_{s|t}$, $h_\theta$ is the Marcus-corrected jump amplitude, and $\Delta W_s^{(i)} \sim \mathcal{N}(0, hI)$.
    \item \textbf{Clock Update:} Set $s \leftarrow s + h$. Repeat steps 2--4 until $s = t_{end}$.
\end{enumerate}
\end{algorithm}

\subsection{Numerical integration of the coupled jump-geometry system}

To maintain computational efficiency within the ANJD framework, the time-augmented precision operator $\mathcal{Q}_s$ is not re-inverted at every integration sub-step. Instead, we employ a generalised Sherman-Morrison-Woodbury update to propagate the Nystr\"om coefficients through the sequential matching of the moving target proxy, accounting for both continuous diffusion and discrete jump-discontinuities.

\begin{proposition}[Jump-Aware Low-Rank Precision Update] 
\label{pro:low-rank_precision_update}
Let $\hat{\mathbf{Q}}_s$ be the $m \times m$ Nystr\"om-compressed precision matrix representing the whitened geometry at horizon $s$. Depending on the temporal mode, the Nystr\"om anchor points are initialised at $t_{start} \in \{t-\tau, t\}$ to span the relevant restricted Skorokhod manifold $\mathcal{D}_s$. Upon the arrival of a jump $\Delta X_s$, a change in the clock $s$, or an infinitesimal shift in the target proxy $\nabla_s \hat{\Phi}_{s|t}$, the anticipatory precision is evolved via:
\begin{equation}
    \hat{\mathbf{Q}}_{s+h} = \hat{\mathbf{Q}}_s - \alpha_s \frac{\hat{\mathbf{Q}}_s \mathbf{k}_s \mathbf{k}_s^T \hat{\mathbf{Q}}_s}{1 + \alpha_s \mathbf{k}_s^T \hat{\mathbf{Q}}_s \mathbf{k}_s}
\end{equation}
where $\mathbf{k}_s \in \mathbb{R}^m$ is the innovation vector representing the differential change in the signature kernel feature map $S(\tilde{X}_s)$ relative to the mode-specific anchor points. In the presence of a structural break $\Delta X_s$, the update vector $\mathbf{k}_s$ captures the instantaneous redistribution of spectral energy across the $(d+1)$-dimensional signature tensor, allowing the precision geometry to track the non-autonomous flow in $O(m^2)$ complexity while maintaining numerical stability across both forecasting and reconstruction regimes.
\end{proposition}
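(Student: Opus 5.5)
The plan is to read the proposition as a rank-one Sherman--Morrison update of the compressed precision matrix, driven by a rank-one change in the compressed covariance. Write $\hat{\mathbf{Q}}_s = (\hat{\boldsymbol{\Omega}}_s + \lambda I_m)^{-1}$, where $\hat{\boldsymbol{\Omega}}_s \in \mathbb{R}^{m\times m}$ is the Nystr\"om compression of the LRC operator $\Omega_s$ onto the span of the kernel sections at the anchor points. These anchor points are fixed at $t_{start}\in\{t-\tau,t\}$ according to the mode.

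The first step is to show that, over one integration step $[s,s+h]$, the covariance changes by a rank-one term:
\begin{equation*}
\hat{\boldsymbol{\Omega}}_{s+h} = \hat{\boldsymbol{\Omega}}_s + \alpha_s \mathbf{k}_s \mathbf{k}_s^T .
\end{equation*}
Here $\mathbf{k}_s$ is the vector of kernel evaluations of the signature innovation $S(\tilde{X}_{s+h}) - S(\tilde{X}_s)$ against the anchor paths. The scalar $\alpha_s \ge 0$ is the (exponential-forgetting or accumulation) weight of the LRC estimator.

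To justify this, use Chen's identity for the Marcus signature, $S(\tilde{X}_{[t_{start},s+h]}) = S(\tilde{X}_{[t_{start},s]}) \otimes S(\tilde{X}_{[s,s+h]})$, and treat the three kinds of event separately.
\begin{itemize}
\item A clock increment or a diffusive increment enters the innovation through $(h,\Delta X_s)$ to leading order.
\item A jump enters through the Marcus exponential factor $\exp(0,\Delta X_s)$, as in the definition of the One-Step-Ahead AVNSG-MMD.
\item A shift of the target proxy $\nabla_s\hat{\Phi}_{s|t}$ only changes the centring of the covariance. After re-centring it is again an outer product of a single innovation vector, up to terms absorbed into $\alpha_s$.
\end{itemize}
In each case the new contribution to the empirical LRC is a single outer product. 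Since the Nystr\"om map is linear, this gives the rank-one form above.

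The second step is pure algebra. Apply the Sherman--Morrison formula to $(A + \alpha_s \mathbf{k}\mathbf{k}^T)^{-1}$ with $A = \hat{\boldsymbol{\Omega}}_s + \lambda I_m$; this yields exactly the stated expression. The formula is well defined because $A$ is symmetric positive definite ($\lambda>0$ and $\hat{\boldsymbol{\Omega}}_s \succeq 0$). Hence $1 + \alpha_s \mathbf{k}_s^T\hat{\mathbf{Q}}_s\mathbf{k}_s \ge 1 > 0$, so the denominator never vanishes. The updated matrix is again symmetric positive definite with eigenvalues in $(0,\lambda^{-1}]$, and this is the claimed numerical stability. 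For the complexity count: forming $\hat{\mathbf{Q}}_s\mathbf{k}_s$ costs $O(m^2)$, the scalar denominator costs $O(m)$, and the outer-product correction costs $O(m^2)$, so each step costs $O(m^2)$ and no re-inversion is needed. Computing $\mathbf{k}_s$ itself costs $O(m)$ kernel evaluations, each obtainable incrementally, for example via the signature kernel PDE on the new segment.

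The main obstacle is the first step. The genuine LRC is an operator built from autocovariances with a lag window, so its one-step change is not exactly rank one, and a jump followed by re-centring about the moving proxy can in principle produce a rank-two change. I would first handle this by defining the anticipatory LRC recursively as an exponentially weighted outer-product accumulator, so that rank one holds by construction. If a centring term really produces rank two, I would apply Sherman--Morrison twice, which is still $O(m^2)$. Finally, I would make explicit that the projection error incurred by keeping the anchors fixed is controlled by Lemma~\ref{lem:projection_error_stability}.
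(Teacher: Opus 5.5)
Your proposal takes essentially the same route as the paper, which simply posits the rank-one augmentation $\mathbf{C}_{s+h}=\mathbf{C}_s+\alpha_s\mathbf{k}_s\mathbf{k}_s^T$ with $\mathbf{k}_s=\phi(S(\tilde{X}_{s+h}))-\phi(S(\tilde{X}_s))$ (rank one by construction, as in your fallback, with no Chen-identity case analysis), applies Sherman--Morrison with $u=v=\sqrt{\alpha_s}\,\mathbf{k}_s$, and counts $O(m^2)$ operations; your positive-definiteness argument (denominator $\ge 1$, spectrum of the update in $(0,\lambda^{-1}]$) is a useful addition, since it supplies the numerical-stability claim that the paper leaves unargued. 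One caveat: the stated formula is exact only for pure accumulation with $\alpha_s\ge 0$, because a forgetting recursion $\rho\,\mathbf{C}_s+\alpha_s\mathbf{k}_s\mathbf{k}_s^T$ with $\rho\neq 1$ and a fixed ridge $\lambda I$ is not a rank-one perturbation of $\mathbf{C}_s+\lambda I$, so any forgetting has to be encoded through increasing weights $\alpha_s$ on an unnormalised accumulator (equivalently, a decaying effective ridge).
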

See proof in Appendix (\ref{app:proof_low-rank_precision_update}).

\section{Conclusion}
\label{sec:conclusion}

In this paper, we have introduced a rigorous generative framework for forward-looking stochastic trajectories that bridges the gap between recursive path-signature filtering and sequential path-law realisation. By interpreting the generative task as a non-autonomous transport problem on restricted Skorokhod manifolds $\mathcal{D}_s$, we developed the \textit{Anticipatory Neural Jump-Diffusion} (ANJD) flow. This hybrid architecture ensures that both the continuous drift and discrete jump intensities are governed by the infinitesimal gradient of an MMD functional anchored to moving path-law proxies, actively incorporating expected structural breaks and regime shifts into the generative process through the non-commutative lens of the Marcus-sense signature.

\medskip
\noindent
Central to our approach is the \textit{Anticipatory Variance-Normalised Signature Geometry} (AVNSG), which provides a time-evolving precision operator that effectively whitens the signature manifold. This mechanism ensures the contractivity and stability of the sequential matching flow even under severe non-stationarity and forecasted aleatoric shocks. Our theoretical analysis established that the joint gradient flow constitutes the steepest descent direction in the signature RKHS relative to the moving target $\hat{\Phi}_{s|t}$. Furthermore, we provided robust statistical guarantees through generalisation bounds and Rademacher complexity analysis, demonstrating that the model's capacity is regularised by the spectral structure of the precision operator, which attenuates the influence of high-rank "black-swan" tensor components as the flow evolves.

\medskip
\noindent
Finally, by leveraging Nystr\"om projections and rank-1 Sherman-Morrison updates, we demonstrated that this infinite-dimensional framework can be implemented with $O(m^2)$ computational efficiency, enabling real-time synthesis of complex c\`adl\`ag trajectories. This work lays the foundation for a new class of structural generative models that are actively coerced into reproducing the expected non-commutative moments and stochastic texture of complex, discontinuous path-laws. Future research will focus on the extension of this framework to multi-agent jump-diffusion dynamics and the integration of these flows into large-scale risk management and decision-making systems under extreme uncertainty.


\newpage

\section*{Appendix}\thispagestyle{plain}

\section{Proofs of the main results}
\label{sec:proofs_main_results}

\subsection{Proof of the injectivity and characteristic property}
\label{app:proof_injectivity_characteristic_property}

In this appendix we prove Proposition (\ref{pro:injectivity_characteristic_property}).

\begin{proof}
The proof is extended to the Skorokhod space $\mathcal{D}$ by leveraging time-augmentation to move from tree-like equivalence to strict path-uniqueness.

\textbf{1. Injectivity and Time-Augmentation:} Let $\mathcal{D}$ be the space of c\`adl\`ag paths. Unlike the standard signature, which is invariant under tree-like reparameterisation, the time-extended signature map $\tilde{S}: \gamma \mapsto S(t, \gamma_t)$ is strictly injective. Following Cuchiero et al. \cite{CuchieroEtAl25}, the inclusion of the strictly increasing component $t$ ensures that for any two paths $\gamma, \eta \in \mathcal{D}$, $\tilde{S}(\gamma) = \tilde{S}(\eta)$ implies $\gamma = \eta$ in the Skorokhod topology. This effectively collapses the tree-like equivalence classes $\tilde{\mathcal{D}}$ into unique path points.

\textbf{2. Universal Approximation on $\mathcal{D}$:} Consider the algebra of linear functionals $\mathcal{F} = \{ \langle w, \tilde{S}(\gamma) \rangle : w \in \mathcal{H}_{sig} \}$. Since the Marcus-signature of a c\`adl\`ag path remains a group-like element, the shuffle product identity $\langle w_1, \tilde{S} \rangle \langle w_2, \tilde{S} \rangle = \langle w_1 \shuffle w_2, \tilde{S} \rangle$ holds. Because $\tilde{S}$ separates points in $\mathcal{D}$ and the coordinate maps are continuous, the Stone-Weierstrass theorem for non-compact spaces (or specifically the version for c\`adl\`ag functionals in Cuchiero et al. \cite{CuchieroEtAl25}) establishes that $\mathcal{F}$ is dense in $C(K, \mathbb{R})$ for any compact $K \subset \mathcal{D}$. This confirms $\tilde{S}$ as a \textit{universal kernel} for jump-diffusions.

\textbf{3. Injectivity of the Mean Embedding:} Let $\mu_1, \mu_2 \in \mathcal{P}(\mathcal{D})$ be two Borel probability measures such that $\Phi_{\mu_1} = \Phi_{\mu_2}$. By the properties of the signature Bochner integral, this equality implies:
\begin{equation}
    \int_{\mathcal{D}} f(\gamma) \, d\mu_1(\gamma) = \int_{\mathcal{D}} f(\gamma) \, d\mu_2(\gamma) \quad \forall f \in \mathcal{F}.
\end{equation}
Since $\mathcal{F}$ is dense in the space of continuous functionals on the Skorokhod space, and given that the signature moments of jump-diffusions satisfy the required growth conditions for the Hamburger moment problem (ensuring the measure is determined by its moments), it follows that $\mu_1 = \mu_2$. Thus, the embedding $\mu \mapsto \Phi_\mu$ is injective, and the expected signature is a \textit{characteristic} statistic for the law of the jump-diffusion process.
\end{proof}

\subsection{Proof of the greedy path reconstruction}
\label{app:proof_greedy_path_reconstruction}

In this appendix we prove Lemma (\ref{lem:greedy_path_reconstruction}). 

\begin{proof}
The proof proceeds by analysing the recursion of the approximation error in the Hilbert space $\mathcal{H}_{sig}$ equipped with the $\mathcal{Q}_t$-metric, specifically considering the time-extended path representation $\tilde{\gamma}_s = (s, \gamma_s)$. Let $E_k = \Phi^* - \hat{\Phi}_k$ denote the residual proxy at step $k$, where $\hat{\Phi}_k = \frac{1}{k} \sum_{i=1}^k S(\tilde{\gamma}_i)$. By the definition of the empirical average, we have the update rule:
\begin{equation}
    \hat{\Phi}_{k+1} = \frac{k}{k+1}\hat{\Phi}_k + \frac{1}{k+1}S(\tilde{\gamma}_{k+1}).
\end{equation}
Substituting this into the error term $E_{k+1} = \Phi^* - \hat{\Phi}_{k+1}$, we obtain the recursive step:
\begin{equation}
    E_{k+1} = \frac{k}{k+1}E_k + \frac{1}{k+1}(\Phi^* - S(\tilde{\gamma}_{k+1})).
\end{equation}
Taking the squared $\mathcal{Q}_t$-norm on both sides:
\begin{equation}
    \|E_{k+1}\|_{\mathcal{Q}_t}^2 = \frac{k^2}{(k+1)^2}\|E_k\|_{\mathcal{Q}_t}^2 + \frac{2k}{(k+1)^2}\langle E_k, \Phi^* - S(\tilde{\gamma}_{k+1}) \rangle_{\mathcal{Q}_t} + \frac{1}{(k+1)^2}\|\Phi^* - S(\tilde{\gamma}_{k+1})\|_{\mathcal{Q}_t}^2.
\end{equation}
By the inductive herding rule, $\gamma_{k+1}$ is chosen to maximise $\langle E_k, S(\tilde{\gamma}) \rangle_{\mathcal{Q}_t}$. Since the target proxy $\Phi^*$ lies within the closed convex hull of the time-extended signature manifold (being the Bochner integral of the measure $\mu$), there exists a representation $\Phi^* = \int S(\tilde{\gamma}) d\mu(\gamma)$. It follows from the properties of the supremum that:
\begin{equation}
    \langle E_k, S(\tilde{\gamma}_{k+1}) \rangle_{\mathcal{Q}_t} = \sup_{\gamma \in \mathcal{X}} \langle E_k, S(\tilde{\gamma}) \rangle_{\mathcal{Q}_t} \geq \int \langle E_k, S(\tilde{\gamma}) \rangle_{\mathcal{Q}_t} d\mu(\gamma) = \langle E_k, \Phi^* \rangle_{\mathcal{Q}_t}.
\end{equation}
This inequality implies that the cross-term $\langle E_k, \Phi^* - S(\tilde{\gamma}_{k+1}) \rangle_{\mathcal{Q}_t} \leq 0$. Let $R = \sup_{\gamma} \|\Phi^* - S(\tilde{\gamma})\|_{\mathcal{Q}_t}$ be the bounded radius of the time-augmented signature embedding under the whitened geometry. The recurrence simplifies to:
\begin{equation}
    \|E_{k+1}\|_{\mathcal{Q}_t}^2 \leq \frac{k^2}{(k+1)^2}\|E_k\|_{\mathcal{Q}_t}^2 + \frac{R^2}{(k+1)^2}.
\end{equation}
Applying induction, if we assume $\|E_k\|_{\mathcal{Q}_t}^2 \leq \frac{R^2}{k}$, then for the next step:
\begin{equation}
    \|E_{k+1}\|_{\mathcal{Q}_t}^2 \leq \frac{k^2}{(k+1)^2} \frac{R^2}{k} + \frac{R^2}{(k+1)^2} = \frac{(k+1)R^2}{(k+1)^2} = \frac{R^2}{k+1}.
\end{equation}
Thus, the squared discrepancy $\|\Phi^* - \hat{\Phi}_k\|_{\mathcal{Q}_t}^2$ converges at a rate of $\mathcal{O}(1/k)$. This greedy herding procedure effectively "quantises" the continuous path-law into a discrete ensemble of time-extended paths, ensuring the reconstructed ensemble preserves the non-commutative moments and the temporal ordering mandated by $\Phi^*$.
\end{proof}

\subsection{Proof of the structural coupling and jump-aware dynamics}
\label{app:proof_structural_coupling_and_dynamics}

In this appendix we prove Proposition (\ref{pro:structural_coupling_and_dynamics}). 

\begin{proof}
We establish the properties of the Anticipatory Generative Flow by considering the analytical structure of the Jump-SDE defined in Eq. (\ref{eq:anticipatory_generative_flow}).

\textbf{1. $C^1$-Boundary Consistency:} By definition, the velocity of the observed trajectory at time $t$ is $V_t = \lim_{s \to t^-} \frac{dX_s}{ds}$. For the generative flow $X_s$ to be $C^1$-consistent at the junction $s=t$, we require $\mathbb{E}[dX_t] = V_t dt$. Since $dW_t$ and $dN_t$ are centered or have zero expected infinitesimal increment in the absence of a jump at exactly $s=t$, the first-order behaviour is dominated by the drift $f_\theta$. The constraint $f_\theta(t, X_t, \hat{\Phi}_{t|t}) = V_t$ ensures that the forward-looking trajectory preserves the terminal velocity of the history, preventing a first-order "kink" in the sample paths. This consistency is maintained by the explicit dependence of the drift on the clock $s$, allowing the neural network to learn the transition dynamics specifically at the boundary $s=t$.

\textbf{2. Polynomial Tractability and Universality:} The justification for the functional coupling in Eq. (\ref{eq:anticipatory_generative_flow}) rests on the characterisation of the signature of a c\`adl\`ag jump-diffusion as a polynomial process. Following Cuchiero et al. \cite{CuchieroEtAl25}, let $X$ be a $d$-dimensional L\'evy-type process and $\mathbb{S}(X)_{t,s}$ its time-extended Marcus-signature. The generator $\mathcal{A}$ of the joint process $(X_s, \mathbb{S}(X)_{s})$ acts on the space of linear functionals on the tensor algebra $\mathcal{T}(\mathbb{R}^d)$. Specifically, for any word $w$ in the tensor alphabet, the action of the generator satisfies:
\begin{equation}
    \mathcal{A} \langle w, \mathbb{S}(X)_s \rangle = \sum_{|v| \le |w|} c_{w,v} \langle v, \mathbb{S}(X)_s \rangle,
\end{equation}
where $c_{w,v}$ are constants derived from the L\'evy triplet (drift, diffusion, and jump measure). This closure property ensures that the expected signature $\hat{\Phi}_{s|t} = \mathbb{E}[S(X_s) | \mathcal{A}_t]$ evolves according to a linear system of differential equations within the RKHS. 

\medskip
\noindent
Consequently, any continuous functional $F$ on the Skorokhod space $\mathcal{D}$ can be uniformly approximated by a linear functional of the signature: $F(\gamma) \approx \langle \ell, S(\gamma) \rangle$. By parameterising the tuple $(f_\theta, g_\theta, h_\theta, \lambda_\theta)$ as non-linear map of $\hat{\Phi}_{s|t}$, the ANJD flow effectively "pushes" the path-measure $\mu$ along the manifold of polynomial processes. Since the signature is a sufficient statistic for the law of jump-diffusions (Friz et al. \cite{FrizEtAl17,FrizEtAl18}), this coupling provides a universal generative mechanism capable of replicating any path-dependent statistics, including those governed by discrete structural breaks and non-Gaussian shocks.

\textbf{3. Infinitesimal Signature Matching:} Let $S(s, X_s)$ denote the time-extended signature of the path up to time $s$. Using the extension of the Marcus-Itô formula for jump-diffusions, the infinitesimal generator $\mathcal{L}$ applied to the coordinate functionals of the signature leads to the expected evolution 
\begin{equation}
d\mathbb{E}_{\mu_s}[S(s, X_s)] = \mathbb{E}_{\mu_s}[\mathcal{L}S(s, X_s)] ds. 
\end{equation}
The model parameterises the drift $f_\theta$ and jump logic $(\lambda_\theta, h_\theta)$ to satisfy:
\begin{equation}
    \mathbb{E}_{\mu_s}[\mathcal{L}S(s, X_s)] \, ds \approx \nabla_s \hat{\Phi}_{s|t} \, ds = \nabla_{Z_s} \mathcal{P}_\theta \cdot F_\theta(Z_s, \hat{\Phi}_{s|t}) \, d\hat{X}_s.
\end{equation}
By aligning the generator's action with the Jacobian of the topological embedding $\mathcal{P}_\theta$ acting on the Neural CDE latent flow, the drift functions as a vector field forcing the ensemble to track the predicted mean-path geometry. The inclusion of the explicit temporal coordinate in the signature ensures that the "clock-velocity" of the proxy is strictly matched by the synthetic flow, while the coupling to the Jacobian $\nabla_{Z_s} \mathcal{P}_\theta$ ensures the generative dynamics are fundamentally driven by the latent manifold's differential evolution. This structural matching effectively propagates higher-order non-commutative moments across the restricted Skorokhod space.

\textbf{4. Discontinuous Structural Breaks:} The term $h_\theta(s, X_{s-}, \hat{\Phi}_{s|t}) dN_s$ introduces a compound Poisson component. Since $N_s$ is a point process with intensity $\lambda_s$, the probability of a jump in $[s, s+ds]$ is $\lambda_\theta(s, X_{s-}, \hat{\Phi}_{s|t})ds$. Because the intensity $\lambda_\theta$ and jump-amplitude $h_\theta$ are functions of both the clock $s$ and the path-law proxy $\hat{\Phi}_{s|t}$, the "hazard rate" and magnitude of a structural break are explicitly coupled to the absolute time and forecasted geometry. If the anticipated path-law indicates a temporal regime change or a localised spike in volatility, $\lambda_\theta$ increases, triggering a discontinuity $X_s = X_{s-} + h_\theta$. The explicit dependence on $s$ ensures that the model can capture seasonality or time-specific vulnerabilities in the jump distribution, which is a key requirement for non-homogeneous c\`adl\`ag processes.

\textbf{5. Non-Gaussianity and Tail Risk:} The increment $\Delta X_s$ over a small interval $\Delta s$ is a mixture of a conditionally Gaussian component $\mathcal{N}(f_\theta \Delta s, g_\theta^2 \Delta s)$, where the parameters are functionally dependent on the non-commutative path history $\hat{\Phi}_{s|t}$, and a jump component. While the infinitesimal noise $dW_s$ is Gaussian, the marginal distribution of the process $X_s$ exhibits significant non-Gaussianity. The excess kurtosis $\kappa$ is driven by the jump term $h_\theta$, with the fourth moment dominated by the jump magnitude and the intensity $\lambda_\theta$. This enables the model to generate fat-tailed distributions and capture black-swan risks encoded in the signature manifold that are inaccessible to standard pure-diffusion Neural SDEs.

\textbf{6. Non-Markovian Path-Dependency:} A process is Markovian if its future depends only on its current state $X_s$. Here, the coefficients $f, g, h, \lambda$ depend on $\hat{\Phi}_{s|t}$. Since $\hat{\Phi}_{s|t}$ is a functional of the historical path $X_{[0,t]}$ (and its projected law), the infinitesimal generators at time $s > t$ are conditioned on the path's non-commutative history. This functional dependence breaks the Markov property, allowing the flow to satisfy constraints like long-range memory and path-dependent volatility.

\textbf{7. C\`adl\`ag Regularity:} Following standard SDE theory for jump-diffusions, given that $f, g, h$ satisfy local Lipschitz conditions and linear growth, the solution $X_s$ exists and is unique. The sample paths consist of a continuous part (driven by $W_s$) and a discrete part (driven by $N_s$). By construction, $X_s$ is right-continuous with left limits (c\`adl\`ag), where the limits $X_{s-}$ are used in the coefficients to ensure the stochastic integrals are well-defined and predictable.
\end{proof}

\subsection{Proof of the jump-diffusion entropy minimisation}
\label{app:proof_jump_diffusion_entropy_minimisation}

In this appendix we prove Proposition (\ref{pro:jump_diffusion_entropy_minimisation}). 

\begin{proof}
The problem is framed as a sequential constrained convex optimisation over the Skorokhod space $\mathcal{P}(\mathcal{D}_s)$ for each $s \in [t, t+\tau]$ using the time-extended path representation. We introduce the Lagrangian functional $\mathcal{L}(\mu_s, \alpha_s, \lambda_s)$ by incorporating the Marcus-sense signature moment constraint of the augmented path $\tilde{\gamma}_u = (u, \gamma_u)_{u \in [t, s]}$ with a time-varying dual vector $\alpha_s \in \mathcal{H}_{sig}$ and the normalisation constraint:
\begin{equation}
    \mathcal{L}(\mu_s, \alpha_s, \lambda_s) = \int_{\mathcal{D}_s} \log \left( \frac{d\mu_s}{d\mathbb{P}_0} \right) d\mu_s - \left\langle \alpha_s, \int_{\mathcal{D}_s} S(\tilde{\gamma}) d\mu_s - \hat{\Phi}_{s|t} \right\rangle_{\mathcal{H}_{sig}} - \lambda_s \left( \int_{\mathcal{D}_s} d\mu_s - 1 \right).
\end{equation}
By the principle of minimum discrimination information, the optimal measure $\mu_s^*$ is found by taking the G\^ateaux derivative of $\mathcal{L}$ with respect to $\mu_s$. Setting the variation to zero, we obtain the pointwise optimality condition for the Radon-Nikodym derivative on the c\`adl\`ag path-space:
\begin{equation}
    \log \left( \frac{d\mu_s^*}{d\mathbb{P}_0}(\gamma) \right) + 1 - \langle \alpha_s, S(\tilde{\gamma}) \rangle_{\mathcal{H}_{sig}} - \lambda_s = 0.
\end{equation}
Rearranging and exponentiating gives the time-dependent Gibbs-form density:
\begin{equation}
    \frac{d\mu_s^*}{d\mathbb{P}_0}(\gamma) = \frac{1}{Z_s(\alpha_s)} \exp(\langle \alpha_s, S(\tilde{\gamma}) \rangle_{\mathcal{H}_{sig}}),
\end{equation}
where $Z_s(\alpha_s) = \int_{\mathcal{D}_s} \exp(\langle \alpha_s, S(\tilde{\gamma}) \rangle) d\mathbb{P}_0(\gamma)$ is the partition function. For c\`adl\`ag paths, the use of the Marcus integral on the time-extended path $(u, \gamma_u)$ ensures that $S(\tilde{\gamma})$ satisfies Chen's identity and remains an element of the tensor algebra. Crucially, as shown in Cuchiero et al. \cite{CuchieroEtAl25}, the time-extension ensures that the exponential tilt is injective on the path-law $\mu_s$.

\medskip
\noindent
To determine the dual vector $\alpha_s$, we solve the dual objective $\alpha_s^* = \arg\max_{\alpha} ( \langle \alpha, \hat{\Phi}_{s|t} \rangle - \log Z_s(\alpha) )$. In the AVNSG framework, the curvature of $\log Z_s(\alpha)$ is governed by the time-extended signature covariance under the jump-diffusion prior $\mathbb{P}_0$. Since the AVNSG metric $\mathcal{Q}_s$ performs spectral whitening across the signature manifold, it rescales the directions in $\mathcal{H}_{sig}$ to account for the energy redistribution caused by anticipated jumps and the deterministic temporal drift at each instant $s$.

\medskip
\noindent
As a result, the optimal tilt $\alpha_s$ is predominantly aligned with the principal eigenvectors of the precision operator $\mathcal{Q}_s$. This ensures that the entropy-minimising measure $\mu_s^*$ prioritises matching the structural non-commutative moments (the "skeleton" of the path-law) while remaining robust to the high-frequency volatility and discrete shocks inherent in the Skorokhod geometry, as the moving target $\hat{\Phi}_{s|t}$ prevents the collapse of the measure and ensures the generated ensemble tracks the anticipated infinitesimal flow of the latent law.
\end{proof}

\subsection{Proof of the dual minimisation of the MMD-flow}
\label{app:proof_mmd_minimisation}

In this appendix we prove Theorem (\ref{thm:mmd_minimisation}). 

\begin{proof}
We analyse the time evolution of the one-step-ahead loss $\mathcal{J}(\mu_s)$ for the law $\mu_s$ of the time-extended jump-diffusion process $\tilde{X}_s = (s, X_s)$. Let $\Phi_{\mu_s} = \mathbb{E}_{\mu_s}[S(\tilde{X}_s)]$ denote the mean time-extended signature in $\mathcal{H}_{sig}$. The loss is given by:
\begin{equation}
    \mathcal{J}(\mu_s) = \frac{1}{2} \langle \hat{\Phi}_{s|t} - \Phi_{\mu_s}, \mathcal{Q}_s (\hat{\Phi}_{s|t} - \Phi_{\mu_s}) \rangle_{\mathcal{H}_{sig}}.
\end{equation}
Defining the precision-weighted signature residual as $\Psi_s = \mathcal{Q}_s (\hat{\Phi}_{s|t} - \Phi_{\mu_s})$, and assuming the local stationarity of the time-augmented precision operator $\mathcal{Q}_s$ and target $\hat{\Phi}_{s|t}$ relative to the infinitesimal flow, the temporal variation of the loss is governed by:
\begin{equation}
    \frac{d}{ds} \mathcal{J}(\mu_s) = - \left\langle \Psi_s, \frac{d}{ds} \Phi_{\mu_s} \right\rangle_{\mathcal{H}_{sig}}.
\end{equation}
The evolution of the expected signature for a jump-diffusion is determined by the time-dependent infinitesimal generator $\mathcal{L}_{s,\theta} = \partial_s + \mathcal{L}_{diff} + \mathcal{L}_{jump}$. Applying $\mathcal{L}_{s,\theta}$ to the coordinate functionals of the time-extended signature $S(\tilde{X}_s)$, we obtain:
\begin{equation}
    \frac{d}{ds} \Phi_{\mu_s} = \mathbb{E}_{\mu_s} [ \partial_s S(\tilde{X}_s) + f_\theta \cdot \nabla_x S(\tilde{X}_s) + \frac{1}{2} \text{Tr}(g_\theta g_\theta^T \nabla_x^2 S(\tilde{X}_s)) + \lambda_\theta (S(\tilde{X}_{s-} + (0, h_\theta)) - S(\tilde{X}_{s-})) ].
\end{equation}
Note that $\partial_s S(\tilde{X}_s)$ represents the deterministic growth of the signature due to the clock $s$. Substituting this into the inner product with $\Psi_s$ yields the specified components. First, the continuous drift term, with $f_\theta(s, \cdot) = \nabla_x \langle \Psi_s, S(\tilde{X}_s) \rangle$, satisfies:
\begin{equation}
    - \mathbb{E}_{\mu_s} \left[ \langle \Psi_s, f_\theta \cdot \nabla_x S(\tilde{X}_s) \rangle \right] = - \mathbb{E}_{\mu_s} \left[ \left\| \nabla_x \langle \Psi_s, S(\tilde{X}_s) \rangle \right\|^2 \right].
\end{equation}
This term represents the steepest descent in the Wasserstein-type geometry of the time-extended signature manifold. Second, the jump component contributes a discrete reduction in discrepancy:
\begin{equation}
    - \mathbb{E}_{\mu_s} \left[ \langle \Psi_s, \lambda_\theta (S(\tilde{X}_{s-} + (0, h_\theta)) - S(\tilde{X}_{s-})) \rangle \right] = - \mathbb{E}_{\mu_s} \left[ \lambda_\theta \cdot \mathcal{G}(s, h_\theta, \Psi_s) \right],
\end{equation}
where $\mathcal{G}(s, h_\theta, \Psi_s) = \langle \Psi_s, S(\tilde{X}_{s-} + (0, h_\theta)) - S(\tilde{X}_{s-}) \rangle$. This term quantifies the MMD reduction achieved by "teleporting" probability mass across the signature space via the jump mechanism $h_\theta$ at clock $s$. Finally, the deterministic temporal drift and second-order diffusion terms are collected into the residual:
\begin{equation}
    \mathcal{R}(g_\theta) = - \mathbb{E}_{\mu_s} \left[ \left\langle \Psi_s, \partial_s S(\tilde{X}_s) + \frac{1}{2} \text{Tr}(g_\theta g_\theta^T \nabla_x^2 S(\tilde{X}_s)) \right\rangle \right].
\end{equation}
The joint minimisation of $\mathcal{J}(\mu_s)$ is thus achieved by the time-dependent drift $f_\theta$ herding the continuous flow and the intensity $\lambda_\theta$ modulating the frequency of discrete structural breaks to align the time-augmented ensemble law $\mu_s$ with the target proxy $\hat{\Phi}_{s|t}$.
\end{proof}

\subsection{Proof of the stability under spectral stretching and jump-discontinuity}
\label{app:proof_stability_under_spectral_stretching}

In this appendix we prove Proposition (\ref{pro:stability_under_spectral_stretching}). 

\begin{proof}
To establish the stability of the ANJD-gradient flow under a time-varying metric, we treat the MMD loss $\mathcal{J}(\mu_s)$ as a Lyapunov functional on the space of c\`adl\`ag measures $\mathcal{P}(\mathcal{D})$. The total time derivative of the loss decomposes into geometric evolution and transport terms:
\begin{equation}
    \frac{d}{ds} \mathcal{J}(\mu_s) = \underbrace{\frac{1}{2} \langle \Delta \Phi_s, (\partial_s \mathcal{Q}_s) \Delta \Phi_s \rangle_{\mathcal{H}_{sig}}}_{\text{Geometric Sensitivity}} + \underbrace{\left\langle \mathcal{Q}_s \Delta \Phi_s, \partial_s \mathbb{E}_{\mu_s}[S(X_s)] \right\rangle}_{\text{Flow Dissipation}}
\end{equation}
where $\Delta \Phi_s = \hat{\Phi}_{s|t} - \mathbb{E}_{\mu_s}[S(X_s)]$. 

\medskip
\noindent
\textbf{1. Geometric Sensitivity and AVNSG Normalisation:} Recall $\mathcal{Q}_s = (\Omega_s + \lambda I)^{-1}$. The geometric sensitivity term involves the derivative $\partial_s \mathcal{Q}_s = -\mathcal{Q}_s (\partial_s \Omega_s) \mathcal{Q}_s$. Under spectral expansion ($\partial_s \lambda_{max}(\Omega_s) > 0$), the operator $\partial_s \mathcal{Q}_s$ is negative semi-definite. Consequently, a forecasted increase in uncertainty or a regime shift leads to a non-positive contribution to $\dot{\mathcal{J}}$, effectively "compressing" the signature residual. This proves that the metric expansion itself is dissipative for the MMD loss.

\medskip
\noindent
\textbf{2. Dissipation under Jump-Diffusion:} From Theorem \ref{thm:mmd_minimisation}, the flow dissipation term is:
\begin{equation}
    \text{Flow Dissipation} = - \mathbb{E}_{\mu_s} \left[ \left\| \nabla_x \langle \Psi_s, S(X_s) \rangle \right\|^2 + \lambda_\theta \mathcal{G}(h_\theta, \Psi_s) \right] + \mathcal{R}(g_\theta).
\end{equation}
Stability in the Skorokhod topology requires that the discrete mass shifts do not induce divergence. By the proposition's hypothesis, the jump-induced energy $\mathbb{E}_{\mu_s}[\lambda_\theta \|h_\theta\|^2_{\mathcal{Q}_s}]$ is bounded by the dissipation rate. Specifically, for a jump to be stabilising, the "jump gain" $\mathcal{G}(h_\theta, \Psi_s)$ must be non-negative. Since $h_\theta$ is defined as a descent step in the signature manifold, we have $\langle \Psi_s, S(X_{s-} + h_\theta) \rangle > \langle \Psi_s, S(X_{s-}) \rangle$, ensuring $\lambda_\theta \mathcal{G} > 0$.

\medskip
\noindent
\textbf{3. Contractivity Condition:} The flow remains contractive if $\dot{\mathcal{J}} \leq -K \mathcal{J}$ for some $K > 0$. Combining the terms, we have:
\begin{equation}
    \dot{\mathcal{J}} \leq - \lambda_{min}(\mathcal{Q}_s) \|\Delta \Phi_s\|^2 - \mathbb{E}_{\mu_s}[\lambda_\theta \mathcal{G}] + \mathcal{R}(g_\theta).
\end{equation}
As $\Omega_s$ expands, $\lambda_{min}(\mathcal{Q}_s) \to (\lambda_{max}(\Omega_s) + \lambda)^{-1}$. Stability is preserved if the "explosive" potential of the diffusion residual $\mathcal{R}(g_\theta)$ is dominated by the combined damping of the AVNSG precision and the dissipative jump intensity $\lambda_\theta$. Thus, the AVNSG mechanism acts as a regulariser that clip-scales the flow velocity precisely when the latent geometry becomes volatile, ensuring the path-measure $\mu_s$ converges toward the proxy $\hat{\Phi}_{s|t}$ without sample-path divergence.
\end{proof}

\subsection{Proof of the generalisation bound for path-law proxies}
\label{app:proof_generalisation_bound}

In this appendix we prove Theorem (\ref{thm:generalisation_bound}).

\begin{proof}
The proof establishes the generalisation capability of the empirical signature estimator for time-extended jump-diffusion processes by analysing the concentration of the path-measure $\mu_s$ in the restricted Skorokhod space $\mathcal{D}_s$ under the time-evolving AVNSG-induced metric $\mathcal{Q}_s$.

\textbf{1. Symmetrisation on Sequential Measures:} Let $\mathcal{S} = \{\gamma_1, \dots, \gamma_n\}$ and $\mathcal{S}' = \{\gamma_1', \dots, \gamma_n'\}$ be independent sets of sample paths drawn from the jump-diffusion law $\mu_s \in \mathcal{P}(\mathcal{D}_s)$. We consider the expected discrepancy of the time-extended signatures $\tilde{S}_i = S(\tilde{\gamma}_i)$ in the $\mathcal{Q}_s$-weighted Hilbert space:
\begin{equation}
    \mathbb{E} \left[ \left\| \Phi_{\mu_s} - \hat{\Phi}_{n,s} \right\|_{\mathcal{Q}_s} \right] = \mathbb{E}_{\mathcal{S}} \left[ \left\| \mathbb{E}_{\mathcal{S}'} \left[ \frac{1}{n} \sum_{i=1}^n (S(\tilde{\gamma}_i') - S(\tilde{\gamma}_i)) \right] \right\|_{\mathcal{Q}_s} \right].
\end{equation}
By Jensen's inequality and the introduction of Rademacher variables $\sigma_i \in \{-1, 1\}$, we bound the norm of the expectation. Since the Marcus-sense signature of the time-augmented path $S(\tilde{\gamma}_i)$ is a well-defined $\mathcal{H}_{sig}$-valued random variable for c\`adl\`ag paths on $[t, s]$, the symmetry of increments yields:
\begin{equation}
    \mathbb{E} \left[ \left\| \Phi_{\mu_s} - \hat{\Phi}_{n,s} \right\|_{\mathcal{Q}_s} \right] \leq \frac{2}{n} \mathbb{E}_{\mathcal{S}, \sigma} \left[ \left\| \sum_{i=1}^n \sigma_i S(\tilde{\gamma}_i) \right\|_{\mathcal{Q}_s} \right].
\end{equation}

\textbf{2. Concentration under Infinitesimal Flow and Jumps:} We define the functional $F_s(\gamma_1, \dots, \gamma_n) = \| \Phi_{\mu_s} - \frac{1}{n} \sum S(\tilde{\gamma}_i) \|_{\mathcal{Q}_s}$. The stability of $F_s$ is ensured by the time-evolving AVNSG operator $\mathcal{Q}_s = (\Omega_s + \lambda I)^{-1}$, which tracks the infinitesimal geometry of the flow. Replacing a single c\`adl\`ag path $\gamma_i$ with $\gamma_i'$ restricted to $\mathcal{D}_s$ yields the coordinate sensitivity:
\begin{equation}
    |F_s(\dots, \gamma_i, \dots) - F_s(\dots, \gamma_i', \dots)| \leq \frac{1}{n} \| S(\tilde{\gamma}_i) - S(\tilde{\gamma}_i') \|_{\mathcal{Q}_s} \leq \frac{R_s}{n}.
\end{equation}
The term $R_s = \sup_{\gamma \in \text{supp}(\mu_s)} \|S(\tilde{\gamma})\|_{\mathcal{Q}_s}$ remains finite because $\mathcal{Q}_s$ performs spectral whitening on the $(d+1)$-dimensional augmented space, attenuating the high-order tensor components where jump-induced energy and deterministic temporal growth reside at horizon $s$. Applying McDiarmid's inequality to this bounded variation functional:
\begin{equation}
    \mathbb{P} \left( F_s - \mathbb{E}[F_s] \geq \epsilon \right) \leq \exp \left( - \frac{2 n \epsilon^2}{R_s^2} \right).
\end{equation}
Solving for $\epsilon = R_s \sqrt{\frac{\log(1/\delta)}{2n}}$ and combining with the Rademacher bound, we confirm that the empirical time-extended proxy $\hat{\Phi}_{n,s}$ converges to $\Phi_{\mu_s}$ at the rate $\mathcal{O}(1/\sqrt{n})$. This confirms that the AVNSG normalisation and $\mathcal{D}_s$ restriction provide the necessary regularisation to handle the sequential evolution of discontinuous jumps and the linear growth of the clock coordinate.
\end{proof}

\subsection{Proof of the complexity of whitened signature functionals}
\label{app:proof_complexity_whitened_signature_functionals}

In this appendix we prove Proposition (\ref{pro:complexity_whitened_signature_functionals}). 

\begin{proof}
The proof quantifies the expressive power of the signature functional class $\mathcal{F}_{M,s}$ on the restricted Skorokhod space $\mathcal{D}_s$ by evaluating its Rademacher complexity under the time-evolving AVNSG-weighted metric at horizon $s \in [t, t+\tau]$. We define the empirical Rademacher complexity for the class of linear functionals $\mathcal{F}_{M,s}$ in the Hilbert space $\mathcal{H}_{sig}$ equipped with the $\mathcal{Q}_s$-inner product:
\begin{equation}
    \widehat{\mathcal{R}}_n(\mathcal{F}_{M,s}) = \mathbb{E}_{\sigma} \left[ \sup_{f \in \mathcal{F}_{M,s}} \frac{1}{n} \sum_{i=1}^n \sigma_i \langle f, S(\tilde{\gamma}_i) \rangle_{\mathcal{Q}_s} \right],
\end{equation}
where $\sigma_i$ are independent Rademacher variables and $S(\tilde{\gamma}_i)$ is the Marcus-sense signature of the $i$-th time-extended c\`adl\`ag sample path $\tilde{\gamma}_{i,u} = (u, \gamma_{i,u})_{u \in [t, s]}$. By the Riesz representation theorem, the inner product is maximised when $f$ is collinear with the empirical average of the Rademacher-weighted signatures:
\begin{equation}
    \widehat{\mathcal{R}}_n(\mathcal{F}_{M,s}) = \frac{M}{n} \mathbb{E}_{\sigma} \left[ \left\| \sum_{i=1}^n \sigma_i S(\tilde{\gamma}_i) \right\|_{\mathcal{Q}_s} \right].
\end{equation}
Applying Jensen's inequality to the expectation of the norm, we bound the complexity by the square root of the expected squared norm. Utilising the independence of the Rademacher variables ($\mathbb{E}[\sigma_i \sigma_j] = \delta_{ij}$), the cross-terms in the expansion of the squared norm vanish:
\begin{equation}
    \widehat{\mathcal{R}}_n(\mathcal{F}_{M,s}) \leq \frac{M}{n} \sqrt{ \mathbb{E}_{\sigma} \left[ \sum_{i,j} \sigma_i \sigma_j \langle S(\tilde{\gamma}_i), S(\tilde{\gamma}_j) \rangle_{\mathcal{Q}_s} \right] } = \frac{M}{n} \sqrt{ \sum_{i=1}^n \|S(\tilde{\gamma}_i)\|_{\mathcal{Q}_s}^2 }.
\end{equation}
The term $\|S(\tilde{\gamma}_i)\|_{\mathcal{Q}_s}^2 = \langle S(\tilde{\gamma}_i), \mathcal{Q}_s S(\tilde{\gamma}_i) \rangle_{\mathcal{H}_{sig}}$ represents the energy of the time-extended c\`adl\`ag path in the signature manifold up to time $s$. For jump-diffusion processes, this norm specifically accounts for the linear temporal drift and the exponential contribution of discrete jumps within the sub-interval $[t, s]$.

\medskip
\noindent
The result demonstrates that the complexity of the ANJD model is governed by the alignment between the time-augmented jump sample signatures and the spectral filtration provided by the moving precision operator $\mathcal{Q}_s$. Specifically, the AVNSG operator acts as a dynamic spectral mask that attenuates the influence of high-rank tensor components associated with both deterministic temporal growth and extreme jumps (black-swan events) as they occur in the flow. This confirms that the complexity of the functional class $\mathcal{F}_{M,s}$ remains regularised against explosive sample-path variations while maintaining the injectivity provided by the continuous clock coordinate $u$.
\end{proof}

\subsection{Proof of the projection error stability}
\label{app:proof_projection_error_stability}

In this appendix we prove Lemma (\ref{lem:projection_error_stability}). 

\begin{proof}
The proof establishes the stability of the Nystr\"om-approximated gradient flow for jump-diffusion processes by decomposing the MMD-gradient into the principal and residual components of the signature Hilbert space $\mathcal{H}_{sig}$ under the sequential c\`adl\`ag path-measure $\mu_s$ on $\mathcal{D}_s$. 

\medskip
\noindent
\textbf{1. Joint Gradient Decomposition and Time-Varying Projection:} The joint MMD-gradient $\nabla \mathcal{J}(\mu_s)$ controls the continuous drift $f_\theta$ and the jump intensity $\lambda_\theta$ relative to the moving target $\hat{\Phi}_{s|t}$. Let $\Psi_s = \mathcal{Q}_s (\hat{\Phi}_{s|t} - \Phi_{\mu_s})$ be the instantaneous signature residual. The time-evolving Nystr\"om projection $P_{m,s}$ maps $\mathcal{H}_{sig}$ onto the $m$-dimensional subspace $\mathcal{V}_{m,s}$ spanned by the leading $m$ eigenvectors $\{e_{j,s}\}_{j=1}^m$ of the current LRC operator $\Omega_s$. The projection error in the infinitesimal flow $\epsilon_{proj, s}$ is given by the norm of the residual gradient:
\begin{equation}
    \epsilon_{proj, s} = \| (I - P_{m,s}) \mathcal{Q}_s (\hat{\Phi}_{s|t} - \Phi_{\mu_s}) \|_{\mathcal{H}_{sig}}.
\end{equation}

\noindent
\textbf{2. Spectral Tail Analysis on $\mathcal{D}_s$:} We expand the squared error in the instantaneous eigenbasis of $\Omega_s$. For $j > m$, the eigenvalues of the precision operator are $\omega_{j,s} = (\lambda_{j,s} + \lambda)^{-1}$. In the jump-diffusion setting, the signature $S(\tilde{X}_s)$ contains high-rank tensor components activated by the jump increments $\exp(0, \Delta X_s)$. The projection error satisfies:
\begin{equation}
    \epsilon_{proj, s}^2 = \sum_{j=m+1}^\infty \frac{1}{(\lambda_{j,s} + \lambda)^2} \langle \hat{\Phi}_{s|t} - \Phi_{\mu_s}, e_{j,s} \rangle_{\mathcal{H}_{sig}}^2.
\end{equation}
By the Riesz representation, the coefficients $\langle \Delta \Phi_s, e_{j,s} \rangle^2$ are bounded by the spectral energy of the c\`adl\`ag ensemble restricted to $[t, s]$. Given the joint Lipschitz regularity $C_{f, \lambda, s}$ of the drift and intensity with respect to the signature residual at the current horizon:
\begin{equation}
    \epsilon_{proj, s}^2 \leq C_{f, \lambda, s}^2 \sum_{j=m+1}^\infty \lambda_j(\Omega_s).
\end{equation}

\noindent
\textbf{3. Stability under Anticipatory Geometry:} Taking the square root yields the bound $\epsilon_{proj, s} \leq C_{f, \lambda, s} (\sum_{j=m+1}^\infty \lambda_j)^{1/2}$. This result demonstrates that the Nystr\"om approximation is stable for the ANJD flow if the subspace $\mathcal{V}_{m,s}$ is dynamically updated to capture the spectral modes corresponding to both the continuous latent diffusion and the anticipated structural breaks. Because jumps redistribute energy into higher-order signature terms, the stability of the generative flow relies on the decay rate of the time-evolving LRC spectral tail. The AVNSG normalisation $\mathcal{Q}_s$ ensures that the contribution of omitted high-frequency jump components to the gradient error is suppressed by the spectral weighting, preserving the global convergence of the measure toward the moving proxy $\hat{\Phi}_{s|t}$.
\end{proof}

\subsection{Proof of the jump-aware low-rank precision update}
\label{app:proof_low-rank_precision_update}

In this appendix we prove Proposition (\ref{pro:low-rank_precision_update}). 

\begin{proof}
The proof establishes the recursive update for the time-dependent precision matrix $\hat{\mathbf{Q}}_s$ by treating the arrival of discrete jumps, clock increments, and the evolution of the moving target proxy $\hat{\Phi}_{s|t}$ as sequential rank-1 innovations in the Nystr\"om-subsampled feature space.

\textbf{1. Covariance Augmentation and Infinitesimal Innovations:} Let $\phi(\tilde{X}_s) \in \mathbb{R}^m$ denote the feature mapping of the time-extended signature $S(\tilde{X}_s)$ projected onto the $m$-dimensional Nystr\"om subspace $\mathcal{V}_{m,s}$. To maintain the sequential matching property, the empirical LRC operator $\mathbf{C}_s$ must track the non-autonomous flow. Upon a structural break $\Delta X_s$, a clock increment $h$, or a shift in the target velocity $\nabla_s \hat{\Phi}_{s|t}$, we define the innovation vector $\mathbf{k}_s = \phi(S(\tilde{X}_{s+h})) - \phi(S(\tilde{X}_s))$. The covariance evolves via the rank-1 augmentation:
\begin{equation}
    \mathbf{C}_{s+h} = \mathbf{C}_s + \alpha_s \mathbf{k}_s \mathbf{k}_s^T,
\end{equation}
where $\alpha_s$ scales the influence of the anticipated jump or the deterministic temporal stretching. This ensures that the spectral energy of the discontinuous innovation is instantaneously integrated into the latent geometry.

\textbf{2. Application of the Sherman-Morrison Identity:} The anticipatory precision is defined as $\hat{\mathbf{Q}}_s = (\mathbf{C}_s + \lambda \mathbf{I})^{-1}$. To propagate this operator through the flow without direct inversion, we apply the Sherman-Morrison identity to the perturbed system $(\mathbf{C}_s + \alpha_s \mathbf{k}_s \mathbf{k}_s^T)^{-1}$:
\begin{equation}
    (\mathbf{A} + uv^T)^{-1} = \mathbf{A}^{-1} - \frac{\mathbf{A}^{-1}uv^T\mathbf{A}^{-1}}{1 + v^T\mathbf{A}^{-1}u}.
\end{equation}
Substituting $\mathbf{A} = \mathbf{C}_s + \lambda \mathbf{I}$ and $u = v = \sqrt{\alpha_s} \mathbf{k}_s$, the recursive update for the precision matrix becomes:
\begin{equation}
    \hat{\mathbf{Q}}_{s+h} = \hat{\mathbf{Q}}_s - \alpha_s \frac{\hat{\mathbf{Q}}_s \mathbf{k}_s \mathbf{k}_s^T \hat{\mathbf{Q}}_s}{1 + \alpha_s \mathbf{k}_s^T \hat{\mathbf{Q}}_s \mathbf{k}_s}.
\end{equation}

\textbf{3. Sequential Complexity Analysis:} The numerical integration of the ANJD flow requires updating the precision at each EMM step. The complexity breakdown is:
\begin{itemize}
    \item \textit{Innovation Mapping:} Computing $\mathbf{k}_s$ via the time-augmented Marcus mapping requires $O(m \cdot (d+1)^k)$ for a signature of depth $k$.
    \item \textit{Precision Propagation:} The matrix-vector product $\hat{\mathbf{Q}}_s \mathbf{k}_s$ and subsequent outer product require $O(m^2)$ operations.
\end{itemize}
The total complexity per update is $O(m^2)$, bypassing the $O(m^3)$ cost of full re-inversion. This allows the ANJD to react to high-frequency structural breaks and the continuous flow of the moving target proxy in real-time, as the precision matrix $\hat{\mathbf{Q}}_s$ dynamically "contracts" the gradient flow along the jump-induced principal components with minimal computational overhead.
\end{proof}

\newpage



\end{document}